\def\eqref#1{equation~\ref{#1}}
\def\1{\bm{1}}
\def\vmu{{\bm{\mu}}}
\def\vtheta{{\bm{\theta}}}
\def\vsigma{{\bm{\sigma}}}
\def\vpsi{{\bm{\psi}}}
\def\va{{\bm{a}}}
\def\vg{{\bm{g}}}
\def\vh{{\bm{h}}}
\def\vm{{\bm{m}}}
\def\vp{{\bm{p}}}
\def\vq{{\bm{q}}}
\def\vr{{\bm{r}}}
\def\vs{{\bm{s}}}
\def\vu{{\bm{u}}}
\def\vv{{\bm{v}}}
\def\vx{{\bm{x}}}
\def\vz{{\bm{z}}}
\def\mA{{\bm{A}}}
\def\mB{{\bm{B}}}
\def\mD{{\bm{D}}}
\def\mF{{\bm{F}}}
\def\mG{{\bm{G}}}
\def\mH{{\bm{H}}}
\def\mI{{\bm{I}}}
\def\mM{{\bm{M}}}
\def\mS{{\bm{S}}}
\def\mU{{\bm{U}}}
\def\mV{{\bm{V}}}
\def\mW{{\bm{W}}}
\def\mX{{\bm{X}}}
\DeclareMathAlphabet{\mathsfit}{\encodingdefault}{\sfdefault}{m}{sl}
\SetMathAlphabet{\mathsfit}{bold}{\encodingdefault}{\sfdefault}{bx}{n}
\newcommand{\E}{\mathbb{E}}
\newcommand{\shorteqnote}[1]{ & & & & & \text{\small\llap{#1}}}
\theoremstyle{plain}
\newtheorem{theorem}{Theorem}[section]
\theoremstyle{definition}
\theoremstyle{remark}
\icmltitlerunning{Revisiting Scalable Hessian Diagonal Approximations}
\begin{document}

\twocolumn[
\icmltitle{Revisiting Scalable Hessian Diagonal Approximations\\ for Applications in Reinforcement Learning}

\icmlsetsymbol{equal}{*}

\begin{icmlauthorlist}
\icmlauthor{Mohamed Elsayed}{uofa,amii}
\icmlauthor{Homayoon Farrahi}{uofa,amii}
\icmlauthor{Felix Dangel}{vector}
\icmlauthor{A. Rupam Mahmood}{uofa,amii,cifar}
\end{icmlauthorlist}

\icmlaffiliation{uofa}{Department of Computing Science, University of Alberta, Edmonton, Canada}
\icmlaffiliation{vector}{Vector Institute, Toronto, Canada}
\icmlaffiliation{cifar}{CIFAR AI Chair}
\icmlaffiliation{amii}{Alberta Machine Intelligence Institute}

\icmlcorrespondingauthor{Mohamed Elsayed}{mohamedelsayed@ualberta.ca}

\icmlkeywords{Machine Learning, ICML}

\vskip 0.3in
]

\printAffiliationsAndNotice{}

\begin{abstract}
Second-order information is valuable for many applications but challenging to compute.
Several works focus on computing or approximating Hessian diagonals, but even this simplification introduces significant additional costs compared to computing a gradient. 
In the absence of efficient exact computation schemes for Hessian diagonals, we revisit an early approximation scheme proposed by 
\citet[\emph{BL89}]{becker1988improving}, which has a cost similar to gradients and appears to have been overlooked by the community.
We introduce \emph{HesScale}, an improvement over BL89, which adds negligible extra computation.
On small networks, we find that this improvement is of higher quality than all alternatives, even those with theoretical guarantees, such as unbiasedness, while being much cheaper to compute. 
We use this insight in reinforcement learning problems where small networks are used and demonstrate HesScale in second-order optimization and scaling the step-size parameter.
In our experiments, HesScale optimizes faster than existing methods and improves stability through step-size scaling.
These findings are promising for scaling second-order methods in larger models in the future.\footnote{Code is available at:\\ \url{https://github.com/mohmdelsayed/HesScale}}
\end{abstract}


\section{Introduction}

Second-order information---the entries of the Hessian matrix---is paramount in a wide spectrum of applications, including preconditioning in optimization (\citealt{martens2015optimizing, yao2021adahessian, shen2024ivon}), and estimating the importance of weights or neurons \cite{elsayed2024addressing} in pruning (\citealt{lecun1989optimal,hassibi1992second, singh2020woodfisher}). However, computing the Hessian entries is generally expensive, preventing its usage in systems that require small memory and computation. This is reflected in the minimal adoption of second-order methods compared to their first-order counterparts.

Many second-order methods rely on some approximation of the Hessian entries to make computation less prohibitive. For example, a type of truncated-Newton method called Hessian-free methods \cite{martens2010deep} exploits the Hessian-vector product \cite{pearlmutter1994fast, bekas2007estimator}. However, such methods might require many iterations per update or additional techniques to achieve stability when used in optimization, adding computational overhead \cite{martens2011learning}. Some variations only approximate the diagonals of the Hessian matrix using stochastic estimation with matrix-free computations (e.g., \citealt{chapelle2011improved,martens2012estimating,yao2021adahessian}), but they may produce low-quality approximations of the Hessian entries \citep{jahani2021doubly}. Other methods impose probabilistic modeling assumptions and estimate a block diagonal Fisher information matrix \cite{martens2015optimizing,botev2017practical}, but they are more expensive to compute.

A long-overlooked approach is deterministic diagonal approximations to the Hessian. Specifically, the method proposed by \citet{becker1988improving}, which we call \emph{BL89}, can be implemented as efficiently as first-order methods. We call this method, along with other cheap methods (e.g., \citealp{yao2021adahessian}) with the same computational and memory complexities as the gradient, \textit{scalable second-order methods}, distinguishing them from methods with superlinear computational or memory complexity (e.g., \citealp{mizutani2008second,botev2017practical, dangel2020modular}). Despite the promise of scalable second-order optimization, the approximation quality of BL89 is shown to be poor \cite{hassibi1992second,martens2012estimating}. A scalable second-order method with a high-quality approximation is still needed.

Scalable second-order methods can benefit reinforcement learning (RL) in at least three ways: sample efficiency, stability, and robustness. Second-order optimization can help with sample efficiency due to faster convergence \cite{wu2017scalable}. Moreover, step-size scaling schemes (e.g., \citealp{martens2015optimizing}) use Hessian approximations to improve the stability of reinforcement learning methods that can suffer from instability due to large updates \cite{dohare2023overcoming}. Another benefit of step-size scaling is achieving robustness against step-size sensitivity, allowing for tuning-free optimization \cite{mahmood2012tuning}, which is crucial for learning with real-life robots \cite{mahmood2018benchmarking}.

In this paper, we introduce a refinement of BL89, which we call \textit{HesScale}. Similarly to BL89, our method is scalable, with small computational and memory overhead, but maintains higher approximation accuracy than BL89 and many other methods for Hessian diagonal approximation. We provide two applications of HesScale: second-order optimization and step-size scaling. In supervised learning and RL tasks, we demonstrate that second-order optimization with HesScale achieves better sample efficiency compared to existing scalable second-order methods.
In RL tasks, we show that step-size scaling with HesScale improves both robustness and stability of the base learning method.


\section{Background}
Here, we describe the Hessian matrix for neural networks and some existing methods for estimating it.
Generally, Hessian matrices can be computed for any scalar-valued function that is twice differentiable.
If $f: \mathbb{R}^n \rightarrow \mathbb{R}$ is such a function, then for its argument $\vpsi \in \mathbb{R}^n$, the Hessian matrix $\mH\in \mathbb{R}^{n\times n}$ of $f$ with respect to $\vpsi$ is given by $H_{i,j} = \sfrac{\partial^2 f(\vpsi)}{\partial \psi_i \partial \psi_j}$.
Here, element $i$ of vector $\vv$ is denoted by $v_i$, and element $(i, j)$ of matrix $\mM$ is denoted by $M_{i,j}$.
For optimization in deep learning, the function $f$ is typically the objective function, and 
the vector $\vpsi$ is commonly the weight vector of a neural network.
Computing and storing an $n \times n$ matrix, where $n$ is the number of weights in a neural network, is prohibitively expensive. Therefore, many methods exist for approximating the Hessian matrix or parts of it with less memory footprint, computational requirement, or both. A common technique is to utilize the structure of the function to reduce the computations needed. For example, some approximate a layer-wise block diagonal Hessian. The computation further simplifies when activation functions are assumed to be piece-wise linear. This assumption results in the \emph{Generalized Gauss-Newton} (GGN, \citealp{schraudolph2002fast}) approximation. However, computing and storing the GGN matrix or its block diagonals is still too demanding.

Many approximation methods were developed to reduce the storage and computation requirements of the block-diagonal GGN matrix. For example, under probabilistic modeling assumptions, the \emph{Kronecker-factored Approximate Curvature} (KFAC, \citealp{martens2015optimizing}) method writes each GGN's diagonal block $\mG$ as a Kronecker product of two matrices of smaller sizes as: $\mG\approx\mA \otimes \mB$, where $\mA = \mathbb{E}[\vh\vh^{\top}]$, $\mB = \mathbb{E}[\vg\vg^{\top}]$, $\vh$ is the activation vector, and $\vg$ is the gradient of the loss with respect to the pre-activation vector. The $\mA$ and $\mB$ matrices can be estimated by Monte Carlo estimation. KFAC is also more efficient when used in optimization than other methods approximating GGN block diagonals since it requires inverting only the small matrices using the Kronecker product property $(\mA \otimes \mB)^{-1}=\mA^{-1} \otimes \mB^{-1}$.
Despite KFAC's gains in efficiency, it is still costly since storing the Kronecker matrices can become prohibitively expensive for large-scale problems. Additionally, its Kronecker structure introduces approximation errors.
Alternative approaches that achieve high accuracy exploit the GGN's outer-product structure \cite{dangel2022vivit,yang2022sketch,dangel2019backpack}, but they suffer from a superlinear scaling in the network's output dimension in both memory and compute, which limits their ability to scale.

Restricting calculations to Hessian diagonals provides some curvature information with relatively little computation. However, it has been shown that the exact computation for diagonals of the Hessian typically has quadratic complexity with the unlikely existence of algorithms that can compute the exact diagonals with less than quadratic complexity \cite{martens2012estimating}. 
Some stochastic methods provide a way to compute unbiased estimates of the exact Hessian diagonals. For example, the AdaHessian \cite{yao2021adahessian} algorithm uses Hutchinson’s estimator $\operatorname{diag}(\mH) = \mathbb{E}[\vz \circ (\mH\vz)]$, where $\vz$ is a multivariate random variable with a Rademacher distribution and the expectation can be estimated using Monte Carlo estimation. 
Similarly, the GGN-MC method \cite{dangel2019backpack} uses the relationship between the Fisher information matrix and the Hessian matrix under probabilistic modeling assumptions to have an MC approximation of the diagonal of the GGN matrix. Although these stochastic approximation methods are scalable, that is, with linear computational and memory complexity in the number of parameters and network outputs, they suffer from low approximation quality (see Fig.\ \ref{fig:approximation_quality}), improving which requires many sampling and factors of additional computations.


\section{The Proposed HesScale Method}
\label{method-section}
We present our method for approximating the diagonal of the Hessian at each layer in feed-forward networks, where a backpropagation rule is used to utilize the Hessian of previous layers.
Here, we derive the backpropagation rule for fully connected networks. A similar derivation for fully connected networks with the mean squared error was presented for BL89 \cite{becker1988improving}. However, ours is a refinement of BL89 in that we use the exact diagonals of the Hessian matrix at the last layer. We show that the computational complexity can still be linear in the network's output dimension for some common loss functions. We defer the derivation of Hessian diagonals for the convolutional neural networks to Appendix \ref{appendix:cnn-derivation}.

We use the non-convex stochastic optimization setting where there is an objective we need to minimize. The stochastic objective given a sample $S$ is denoted by $\mathcal{L}(S, \mathcal{W})$, where $S$ is a random variable that can be the input-output pair in supervised learning or a transition tuple in reinforcement learning, and $\mathcal{W}$ is a set of learnable parameters. The \emph{learner} maintains a single or multiple neural network and is required to optimize the objective by changing these network parameters. Specifically, the learner is required to minimize the objective $ \mathbb{E}_S(\mathcal{L}(S, \mathcal{W}))$.

\vspace{-0.2cm}
\begin{figure}[ht]
  \centering
  \includegraphics[width=0.9\columnwidth]{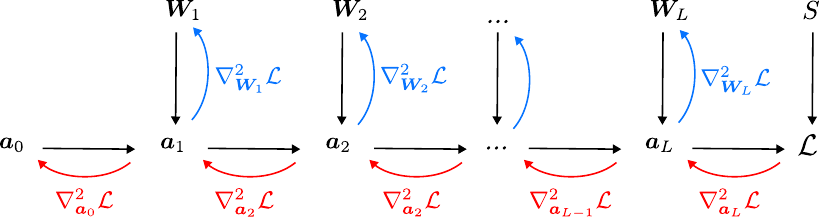}
   \vspace{-0.3cm}
  \caption{Backpropagating the exact Hessian information in a neural network. {\color{red}Red} arrows represent the direction of influence while backpropagating the Hessian of the loss w.r.t. pre-activations which are then used to compute the Hessian of the loss w.r.t. the weights at each layer, denoted by the {\color{blue}blue}  arrows.
  Black arrows denote the direction of influence during the forward pass.
}
  \label{fig:illustration}
\end{figure}

Consider a neural network with $L$ layers parametrized by the set of weights $\mathcal{W}=\{\mW_1,...,\mW_L\}$, where $\mW_l$ is the weight matrix at the $l$-th layer, and its element at the $i$th row and the $j$th column is denoted by $W_{l,i,j}$. 
During learning, the parameters of the neural network are changed to reduce the loss. 
During a forward pass, we get the activation $\vh_{l}$ at layer $l$ by applying the activation function $\boldsymbol\sigma$ to the pre-activation $\va_{l}$: $\vh_{l} = \boldsymbol\sigma(\va_{l})$. We simplify notations by defining $\vh_0 \doteq \vx$.
The activation $\vh_{l}$ is then multiplied by the weight matrix $\mW_{l+1}$ of layer $l+1$ to produce the next pre-activation: ${a}_{l+1,i} = \sum_{j=1}^{|\vh_{l}|} {{W}_{l+1,i,j}}{h}_{l,j}$.\footnote{The bias term can be added by appending an additional column to a weight matrix and a 1 to each layer's input vector.} 
We assume here that the activation function is element-wise activation for all layers except for the final layer $L$, where it becomes the softmax function. 
The backpropagation equations for the described network are given as \cite{rumelhart1986learning}:
\begin{align}
\frac{\partial \mathcal{L}}{\partial W_{l,i,j}} &= \frac{\partial \mathcal{L}}{\partial a_{l,i}} \frac{\partial a_{l,i}}{\partial W_{l,i,j}} = \frac{\partial \mathcal{L}}{\partial a_{l,i}} h_{l-1,j}, \label{equ:gradient-W}
\end{align}
\begin{align}
\frac{\partial \mathcal{L}}{\partial a_{l,i}} &=  \sum_{k=1}^{|\va_{l+1}|} \frac{\partial \mathcal{L}}{\partial a_{l+1,k}} \frac{\partial a_{l+1,k}}{\partial h_{l,i}} \frac{\partial h_{l,i}}{\partial a_{l,i}} \nonumber \\
&= \sigma^{\prime}(a_{l,i})\sum_{k=1}^{|\va_{l+1}|} \frac{\partial \mathcal{L}}{\partial a_{l+1,k}} W_{l+1, k,i}.\label{equ:gradient-a}
\end{align}
In the following, we write the equations for the exact Hessian diagonals with respect to weights $\sfrac{\partial^2 \mathcal{L}}{\partial {W^2_{l,i, j}}}$, which requires the calculation of $\sfrac{\partial^2 \mathcal{L}}{\partial {a^2_{l,i}}}$. Fig.\ \ref{fig:illustration} shows the general computational graph used to backpropagate the second-order information \cite{mizutani2008second,dangel2020modular}, but we only focus next on the diagonal part:
\begin{align}
\frac{\partial^2 \mathcal{L}}{\partial {W^2_{l,i,j}}}
&= \frac{\partial}{\partial W_{l,i,j}} \left(\frac{\partial \mathcal{L}}{\partial a_{l,i}}h_{l-1,j}\right) \nonumber \\
&= \frac{\partial}{\partial a_{l,i}} \left( \frac{\partial \mathcal{L}}{\partial a_{l,i}}\right) \frac{\partial a_{l,i}}{\partial W_{l,i,j}} h_{l-1,j} \nonumber \\
&= \frac{\partial^2 \mathcal{L}}{\partial a^2_{l,i}} h^2_{l-1,j}\nonumber,
\end{align}
\begin{align}
\frac{\partial^2 \mathcal{L}}{\partial {a^2_{l,i}}}
&= \frac{\partial}{\partial a_{l,i}} \left(\sigma^{\prime}(a_{l,i})\sum_{k=1}^{|\va_{l+1}|} \frac{\partial \mathcal{L}}{\partial a_{l+1,k}} W_{l+1,k,i}   \right) \nonumber \\
&= {\sigma^{\prime}}(a_{l,i}) \sum_{k,p=1}^{|\va_{l+1}|} \frac{\partial^2 \mathcal{L}}{\partial a_{l+1,k} \partial a_{l+1,p}} \frac{\partial a_{l+1,p}}{\partial a_{l,i}} W_{l+1,k,i}  \nonumber \\
&\quad + \sigma^{\prime\prime}(a_{l,i}) \sum_{k=1}^{|\va_{l+1}|} \frac{\partial \mathcal{L}}{\partial a_{l+1,k}} W_{l+1,k,i}  \nonumber \\
&= {\sigma^{\prime}}(a_{l,i})^2 \sum_{k,p=1}^{|\va_{l+1}|} \frac{\partial^2 \mathcal{L}}{\partial a_{l+1,k} \partial a_{l+1,p}} W_{l+1,p,i} W_{l+1,k,i} \nonumber \\
&\quad + \sigma^{\prime\prime}(a_{l,i}) \sum_{k=1}^{|\va_{l+1}|} \frac{\partial \mathcal{L}}{\partial a_{l+1,k}} W_{l+1,k,i}. \nonumber
\end{align}
Since the calculation of $\sfrac{\partial^2 \mathcal{L}}{\partial {a^2_{l,i}}}$ depends on the off-diagonal terms, the computation complexity becomes quadratic in the layer's width. 
Following BL89, we approximate the Hessian diagonals by ignoring the off-diagonal terms, leading to a backpropagation rule with linear computational complexity for our estimates $\widehat{\frac{\partial^2 \mathcal{L}}{\partial {W^2_{l,i, j}}}}$ and $\widehat{\frac{\partial^2 \mathcal{L}}{\partial {a^2_{l,i}}}}$:
\begin{align}
\widehat{\frac{\partial^2 \mathcal{L}}{\partial {W^2_{l,i, j}}}} &\doteq \widehat{\frac{\partial^2 \mathcal{L}}{\partial {a^2_{l,i}}}}h^2_{l-1,j}, \label{equ:hessian-approx-W}\\
\widehat{\frac{\partial^2 \mathcal{L}}{\partial {a^2_{l,i}}}} &\doteq {\sigma^{\prime}}(a_{l,i})^2 \sum_{k=1}^{|\va_{l+1}|}\widehat{\frac{\partial^2 \mathcal{L}}{\partial {a^2_{l+1,k}}}} W^2_{l+1,k,i} \nonumber \\
&\quad + \sigma^{\prime\prime}(a_{l,i}) \sum_{k=1}^{|\va_{l+1}|} \frac{\partial \mathcal{L}}{\partial a_{l+1,k}} W_{l+1,k,i} \label{equ:hessian-approx-a}.
\end{align}
For the last layer, we use the exact Hessian diagonals  $\widehat{\frac{\partial^2 \mathcal{L}}{\partial {a^2_{L,i}}}} \doteq \frac{\partial^2 \mathcal{L}}{\partial {a^2_{L,i}}}$ since it can be computed cheaply for some common loss functions. For example, the exact Hessian diagonals for cross-entropy loss with softmax is simply $\vq-\vq\circ\vq$, where $\vq$ is the predicted probability vector and $\circ$ denotes element-wise multiplication.
We show this property with derivations for negative log-likelihood function with Gaussian and softmax distributions in Appendix \ref{appendix:linear-likelihood-hessian} and for other RL-related loss functions in Appendix \ref{appendix:rl-loss-hesscale}.  

We found empirically that this small change makes a large difference in the approximation quality, as shown in Fig.\ \ref{fig:approx_q_norm}.
Hence, unlike BL89, which uses a Hessian diagonal approximation of the last layer by Eq. \ref{equ:hessian-approx-a}, we use the exact values directly to achieve more approximation accuracy. 
We call this method for Hessian diagonal approximation \emph{HesScale} and provide its pseudocode in Algorithm \ref{alg:hesscale}. Finally, we provide in Appendix \ref{appendix:diagonality-conditions} additional analysis on the special cases that make HesScale exact.

\begin{algorithm}
\caption{HesScale}\label{alg:hesscale}
\begin{algorithmic}
\STATE {\bfseries Require:} Neural network $f$ and a layer number $l$
\STATE {\bfseries Require:} $\frac{\partial \mathcal{L}}{\partial \va_{l+1}}$ and $\widehat{\frac{\partial^2 \mathcal{L}}{\partial \va_{l+1}^2}}$, unless $l=L$
\STATE {\bfseries Require:} Loss function $\mathcal{L}$
\IF{$l=L$} 
    \STATE Compute $\frac{\partial \mathcal{L}}{\partial \va_{L}}$ and $\frac{\partial^2 \mathcal{L}}{\partial \va_{L}^2}$
    \STATE Compute $\frac{\partial \mathcal{L}}{\partial \mW_{L}}$ using Eq.\ \ref{equ:gradient-W} 
    \STATE {\color{blue}$\widehat{\frac{\partial^2 \mathcal{L}}{\partial \va_{L}^2}} \leftarrow \frac{\partial^2 \mathcal{L}}{\partial \va_{L}^2}$}
    \STATE Compute $\widehat{\frac{\partial^2 \mathcal{L}}{\partial \mW_{L}^2}}$ using Eq.\ \ref{equ:hessian-approx-W}
\ELSIF{$l\neq L$}
    \STATE Compute $\frac{\partial \mathcal{L}}{\partial \va_{l}}$ and $\frac{\partial \mathcal{L}}{\partial \mW_{l}}$ using Eq. \ref{equ:gradient-a} and Eq. \ref{equ:gradient-W}
    \STATE Compute $\widehat{\frac{\partial^2 \mathcal{L}}{\partial \va_{l}^2}}$ and $\widehat{\frac{\partial^2 \mathcal{L}}{\partial \mW_{l}^2}}$ using Eq.\ \ref{equ:hessian-approx-a} and Eq.\ \ref{equ:hessian-approx-W}
\ENDIF
\STATE {\bfseries Return} $\frac{\partial \mathcal{L}}{\partial \mW_{l}}$, $\widehat{\frac{\partial^2 \mathcal{L}}{\partial \mW_{l}^2}}$, $\frac{\partial \mathcal{L}}{\partial \va_{l}}$, and $\widehat{\frac{\partial^2 \mathcal{L}}{\partial \va_{l}^2}}$
\end{algorithmic}
\end{algorithm}
\vspace{-2mm}
The computation can be reduced further 
by dropping the last term in Eq.\ \ref{equ:hessian-approx-a}, which corresponds to the Gauss-Newton approximation and is justified for piece-wise linear activation functions.
We call this variation \emph{HesScaleGN}.


\section{Approximation Quality}
We evaluate HesScale's approximation quality and compare it with other methods. We start by studying the approximation quality of Hessian diagonals compared to the true values.
In our experiments, we implemented HesScale using the \emph{BackPACK} framework \cite{dangel2019backpack}, which allows easy implementation of backpropagation of statistics other than the gradient.
To measure the approximation quality of the Hessian diagonals for different methods, we use the $L^1$ distance between the exact Hessian diagonals and their approximations, where $L^1(\mathbf{a}, \mathbf{b}) = \sum_i | a_i - b_i |$. Our task here is supervised classification, and data examples are sampled randomly from MNIST. We used a network of three hidden layers with \emph{tanh} activations, each containing $32$ units. The network weights and biases are initialized using Kaiming initialization \cite{he2015delving}. We trained the network with SGD using a batch size of $1$. For each example pair, we compute the exact Hessian diagonals for each layer and their approximations from each method. All layers' errors are summed and then averaged over $1000$ data examples for each method. In this experiment, we used 40 different initializations for the network weights, shown as colored dots in Fig.\ \ref{fig:approx_q_norm}. In this figure, we show the average error incurred by each method normalized by the average error incurred by HesScale. Any approximation that incurs an averaged error above 1 has a worse approximation than HesScale, and any approximation with an error less than 1 has a better approximation than HesScale. Moreover, we show the layer-wise error for each method in Fig.\ \ref{fig:approx_q_layerwise}.

\begin{figure}[ht]
\centering
\subfigure[Normalized $L^1$ error with respect to HesScale]{
     \centering
     \includegraphics[width=0.85\columnwidth]{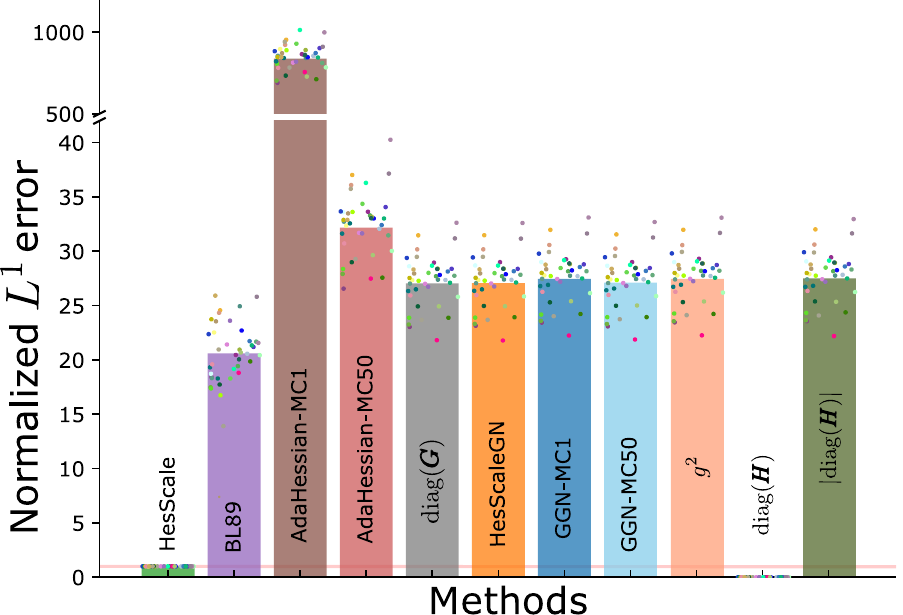}
     \label{fig:approx_q_norm}
}
\subfigure[Layer-wise $L^1$ error]{
     \includegraphics[width=0.85\columnwidth]{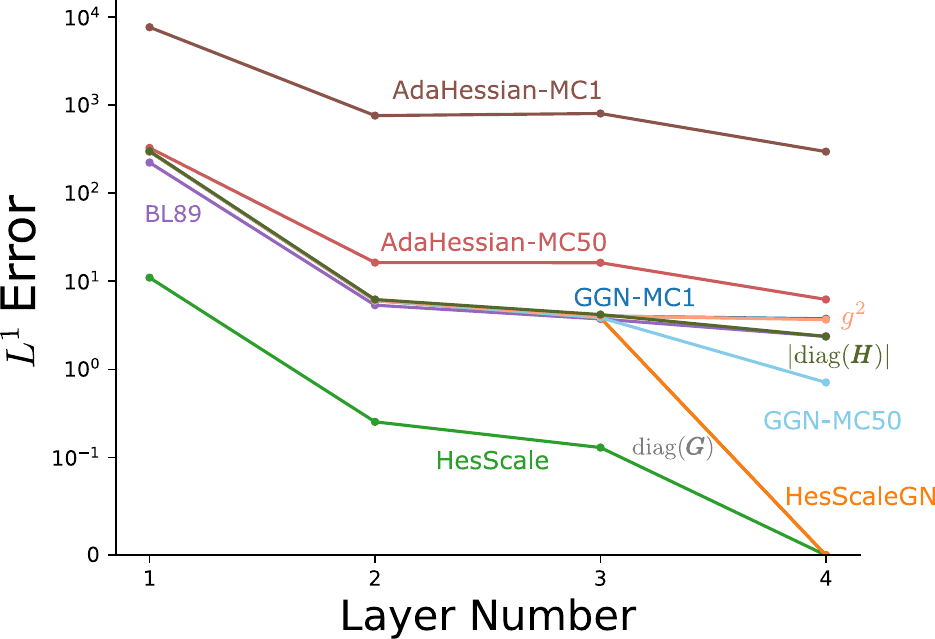}
     \label{fig:approx_q_layerwise}
}
\caption{(a) The average error for each method is normalized by the average error incurred by HesScale. Each colored point represents a different initialization. The norm of the vector of Hessian diagonals $|\operatorname{diag}(\mH)|$ is shown as a reference. (b) The average layer-wise error for each method is shown. HesScale(GN) has zero error at the last layer since it uses the exact entries there.}
\label{fig:approximation_quality}
\end{figure}

Different Hessian diagonal approximations are considered for comparison with HesScale. We included several deterministic and stochastic approximations for the Hessian diagonals. We also included the sample estimate of the Fisher Information Matrix done by squaring the gradients and denoted by $g^2$, which is adopted by many first-order methods (e.g., \citealp{kingma2014adam}). We compare HesScale with two stochastic approximation methods: AdaHessian \cite{yao2021adahessian}, and the Monte-Carlo (MC) estimate of the GGN matrix (GGN-MC, \citealp{dangel2019backpack}). We also compare HesScale with two deterministic approximation methods: the diagonals of the exact GGN matrix \cite{schraudolph2002fast} ($\operatorname{diag}(\mG)$) and the diagonal approximation by \citet{becker1988improving} (BL89). Since AdaHessian and GGN-MC are already diagonal approximations, we use them directly and show the error with $1$ MC sample (GGN-MC1 \& AdaHessian-MC1) and with $50$ MC samples (GGN-MC50 \& AdaHessian-MC50).

HesScale provides a better approximation than the other deterministic and stochastic methods. For stochastic methods, we use many MC samples to improve their approximation. However, their approximation quality is still poor. Methods approximating the GGN diagonals do not capture the complete Hessian information since the GGN and Hessian matrices are different when the activation functions are not piece-wise linear, as is the case for our tanh-activated network. Although these methods approximate the GGN diagonals, their approximation is significantly better than the AdaHessian approximation. 
Among the methods for approximating the GGN diagonals, HesScaleGN 
approximates the exact GGN diagonals closely.
This experiment clearly shows that HesScale achieves the best approximation quality (overall and across layers) compared to other stochastic and deterministic approximation methods. 

\subsection{Diagonality of Hessians w.r.t. Pre-activations}
To understand why HesScale gives such a high approximation quality, we investigate the structure of the matrices propagated in the full Hessian backpropagation. The high approximation quality suggests that those matrices are diagonally dominant; thus, dropping the off-diagonal elements does not significantly harm the approximation. Similar to the metric in \citet{balles2020geometry}, we introduce the metric $\rho \in [0,1]$ that measures the diagonal dominance of a matrix $\mA$ given by
\begin{align*}
    \rho(\mA) &= \frac{\| \operatorname{diag}(\mA) \|_\text{F}}{\| \mA \|_\text{F}},
\end{align*}
where $\| .\|_\text{F}$ is the Frobenius norm and $\operatorname{diag}(.)$ extracts the diagonal of a matrix. The metric $\rho$ gives a value of $1$ when $\mA$ is a diagonal matrix and a value of $0$ when $\mA$ is a hollow matrix. We use $\rho$ to to measure the diagonal dominance of $\{\nabla^2_{\va_i}\mathcal{L}\}_{i=1}^{L}$ (shown in Fig.\ \ref{fig:heatmaps}) on an MLP of four hidden layers each containing $128$ units. In Table \ref{tab:diag-dominance}, we show the diagonal dominance of those matrices before and after training. As a reference, we compute the diagonal dominance for a random matrix with standard Gaussian entries of the same size. We found that $\rho$ of a random matrix is $0.09$. The diagonal dominance metric for each matrix is averaged over $300$ independent runs. We trained for $10000$ iterations on EMNIST \cite{cohen2017emnist} with a batch size of $32$. Table \ref{tab:diag-dominance} shows that $\{\nabla^2_{\va_i}\mathcal{L}\}_{i=1}^{L}$ are diagonally dominant before and after training, which explains the high approximation quality of HesScale compared to other approximations.

\begin{figure}[ht]
\centering
\includegraphics[width=0.9\columnwidth]{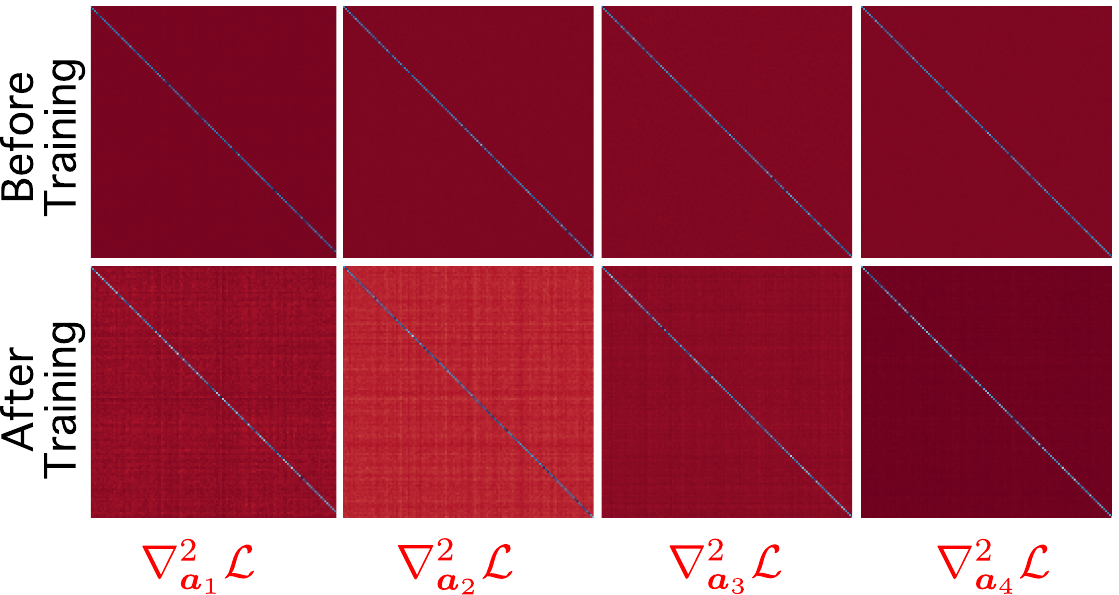}
\caption{Heat maps of Hessian of the loss w.r.t.\ pre-activations. $\{\nabla^2_{\va_i}\mathcal{L}\}_{i=1}^{4}$ visually appear diagonally dominant. Red represents a small magnitude, and blue represents a large magnitude.}
\label{fig:heatmaps}
\end{figure}

\begin{table}[ht]
\centering
\caption{Diagonal dominance before and after training}
\label{tab:diag-dominance}
\begin{tabular}{@{}ccc@{}}
\toprule
Layer Number & $\rho$ (before training) & $\rho$ (after training) \\ 
\midrule
1            & 0.94                    & 0.75            \\
2            & 0.91                    & 0.54            \\
3            & 0.89                    & 0.81            \\
4            & 0.91                    & 0.98            \\
\bottomrule
\end{tabular}
\end{table}

\vspace{-2mm}
\section{Two applications of HesScale}
We utilize HesScale in two applications of second-order information: optimization and step-size scaling.
\subsection{Second-order Optimization}
We introduce an efficient optimizer based on HesScale, which we call \textit{AdaHesScale} (Algorithm \ref{alg:adahesscale}). We follow the same style introduced in Adam \cite{kingma2014adam} in using the squared diagonal approximation instead of the squared gradients to update the moving average.
Moreover, we introduce another optimizer based on HesScaleGN, which we call \textit{AdaHesScaleGN}. 
For the convergence proof of methods with Hessian diagonals, we refer the reader to Appendix \ref{appendix:convergence-proof}. In addition, we provide a scalability comparison for AdaHesScale and AdaHesScaleGN against other optimization methods in Appendix \ref{appendix:scalability}.

\begin{algorithm}[ht]
\caption{AdaHesScale}\label{alg:adahesscale}
\begin{algorithmic}
\STATE {\bfseries Require:} Neural network $f$ with weights $\{\mW_1,...,\mW_L\}$ and a dataset $\mathcal{D}$
\STATE {\bfseries Require:} Small number $\epsilon \leftarrow 10^{-8}$
\STATE {\bfseries Require:} Exponential decay rates $\beta_1,\beta_2\in [0,1)$
\STATE {\bfseries Require:} step size $\alpha$
\STATE {\bfseries Require:} Initialize $\{\mW_1,...,\mW_L\}$
\STATE Initialize time step $t\leftarrow0$.
\FOR{$l$ in $\{L,L-1,...,1\}$} 
\STATE $\mM_{l} \leftarrow 0; \quad \mV_{l} \leftarrow 0$
\ENDFOR
\FOR{$(\vx, y)$ in $\mathcal{D}$}
\STATE $t\leftarrow t+1$
\STATE $\vr_{L+1} \leftarrow \vs_{L+1}\leftarrow \bm{\emptyset}$ 
\FOR{$l$ in $\{L,L-1,...,1\}$}
\STATE Compute Loss $\mathcal{L}(\vx, y)$
\STATE $\mF_{l},\mS_{l}, \vr_{l}, \vs_{l}\leftarrow$\texttt{HesScale}($\mathcal{L}, l, \vr_{l+1}, \vs_{l+1}$)
\STATE $\mM_{l} \leftarrow \beta_1\mM_{l} + (1-\beta_1)\mF_l $ 
\STATE $\mV_{l} \leftarrow \beta_2\mV_{l} + (1-\beta_2)\mS_l^2 $  
\STATE $\hat{\mM}_{l} \leftarrow \mM_{l}/(1-\beta_1^{t})$ 
\STATE $\hat{\mV}_{l} \leftarrow \mV_{l}/(1-\beta_2^{t}) $ 
\STATE $\mW_{l} \leftarrow \mW_{l} - \alpha \hat{\mM}_{l}\oslash \left(\sqrt{\hat{\mV}_l} + \epsilon\right)$
\ENDFOR
\ENDFOR
\end{algorithmic}
\end{algorithm}

\subsection{Step-size Scaling for Robustness and Stability}
Scaling the step size has been a well-known approach for improving the robustness of supervised learning (e.g., \citealp{mahmood2012tuning}) and reinforcement learning (e.g., \citealp{dabney2012adaptive}). K-FAC \cite{martens2015optimizing} implemented a step-size scaling procedure based on trust region analysis, which was later studied with Adam \cite{clarke2023adam}. Such a method has been combined with some RL algorithms, giving algorithms such as ACKTR \cite{wu2017scalable}. The step-size mechanism scales the step size down when the update to be applied becomes outside the trust region radius, making the optimizer less sensitive to the choice of step size and, therefore, improving robustness. More recently, \citet{dohare2023overcoming} demonstrated the problem of extreme instability in deep RL methods when trained for extended periods. Such a problem could be mitigated if the optimizer becomes aware of the size of the update it makes, which is the quantity measured by those step-size scaling methods. Thus, we developed a step-size scaling method following the mechanism outlined in K-FAC and based on our HesScale approximation. The K-FAC mechanism scales down the step size by $\min \left(\alpha_\text{max}, \sqrt{\frac{2\Delta}{\vu^\top \mH \vu}} \right)$, where $\alpha_\text{max}$ is the maximum step size, $\Delta$ is the trust-region radius, and $\vu$ is the regular update if no scaling is performed. Algorithm \ref{alg:scaled-adahesscale} shows our step-size scaling applied to AdaHesScale using a HesScale approximation of $\mH$. 

\begin{algorithm}[ht]
\caption{AdaHesScale with {\color{blue}step-size scaling}}\label{alg:scaled-adahesscale}
\begin{algorithmic}
\STATE {\bfseries Require:} Neural network $f$ with weights $\{\mW_1,...,\mW_L\}$ and a dataset $\mathcal{D}$
\STATE {\bfseries Require:} Small number $\epsilon \leftarrow 10^{-8}$
\STATE {\bfseries Require:} Exponential decay rates $\beta_1,\beta_2\in [0,1)$
\STATE {\bfseries Require:} step size $\alpha$, {\color{blue}trust-region radius $\Delta$}
\STATE {\bfseries Require:} Initialize $\{\mW_1,...,\mW_L\}$
\STATE Initialize time step $t\leftarrow0$.
\FOR{$l$ in $\{L,L-1,...,1\}$} 
\STATE $\mM_{l} \leftarrow 0; \quad \mV_{l} \leftarrow 0; \quad \mU_{l} \leftarrow 0$
\ENDFOR
\FOR{$(\vx, y)$ in $\mathcal{D}$}
\STATE $t\leftarrow t+1$
\STATE {\color{blue}$h \leftarrow 0$}
\STATE $\vr_{L+1} \leftarrow \vs_{L+1}\leftarrow \bm{\emptyset}$ 
\FOR{$l$ in $\{L,L-1,...,1\}$}
\STATE Compute Loss $\mathcal{L}(\vx, y)$
\STATE $\mF_{l},\mS_{l}, \vr_{l}, \vs_{l}\leftarrow$\texttt{HesScale}($\mathcal{L}, l, \vr_{l+1}, \vs_{l+1}$)
\STATE $\mM_{l} \leftarrow \beta_1\mM_{l} + (1-\beta_1)\mF_l $ 
\STATE $\mV_{l} \leftarrow \beta_2\mV_{l} + (1-\beta_2)\mS_l^2 $  
\STATE $\hat{\mM}_{l} \leftarrow \mM_{l}/(1-\beta_1^{t})$ 
\STATE $\hat{\mV}_{l} \leftarrow \mV_{l}/(1-\beta_2^{t}) $ 
\STATE $\mU_{l} \leftarrow \alpha \hat{\mM}_{l}\oslash \left(\sqrt{\hat{\mV}_l} + \epsilon\right)$
\STATE {\color{blue}$\mS \leftarrow \sqrt{\hat{\mV}_l} \circ \mU_l^2$}
\STATE {\color{blue} $h \leftarrow h + \bm1^\top \mS \bm1$}
\ENDFOR
\STATE {\color{blue} $\eta \leftarrow \text{min}\left(1, \sqrt{\frac{2\Delta}{h}}\right)$}
\FOR{$l$ in $\{L,L-1,...,1\}$}
\STATE {\color{blue}$\mW_l \leftarrow \mW_l - \eta \mU_l$}
\ENDFOR
\ENDFOR
\end{algorithmic}
\end{algorithm}


\section{Supervised Learning Experiments}
We compare the performance of our optimizers---AdaHesScale and AdaHesScaleGN---with three second-order optimizers: BL89, GGNMC, and AdaHessian.
We also include comparisons with two first-order methods: Adam and SGD. We exclude K-FAC from our comparisons due to its relatively high cost.

Our optimizers are evaluated in the supervised setting with two experiments using different architectures on the CIFAR-100 dataset.
Instead of attempting to achieve state-of-the-art performance with specialized techniques and architectures, we followed the DeepOBS benchmarking work \cite{schneider2019deepobs} and compared the optimizers in their generic form using relatively simple networks to verify the validity of our method.

In the first experiment, we used the CIFAR-100 3C-3D task from DeepOBS.
The network consists of three convolutional layers with \emph{ReLU} activations, each followed by max pooling. After that, two fully connected layers ($512$ and $256$ units) with \emph{ReLU} activations are used.
We use \textit{ELU} instead of \textit{ReLU}, which is used in DeepOBS, to differentiate between the performance of AdaHesScale and AdaHesScaleGN.
We train each method for $200$ epochs with a batch size of $128$.
In the second experiment, we use the CIFAR-100 ALL-CNN task from DeepOBS with the ALL-CNN-C network, which consists of 9 convolutional layers \cite{springenberg2014striving} with \emph{ReLU} activations. Again, we use \emph{ELU} instead of \emph{ReLU}, which is used in DeepOBS, to differentiate between the performance of AdaHesScale and AdaHesScaleGN. We train each method for $350$ epochs with a batch size of $256$. 

\begin{figure}[ht]
\centering
\subfigure[CIFAR-100 3C3D]{
     \includegraphics[width=0.93\columnwidth]{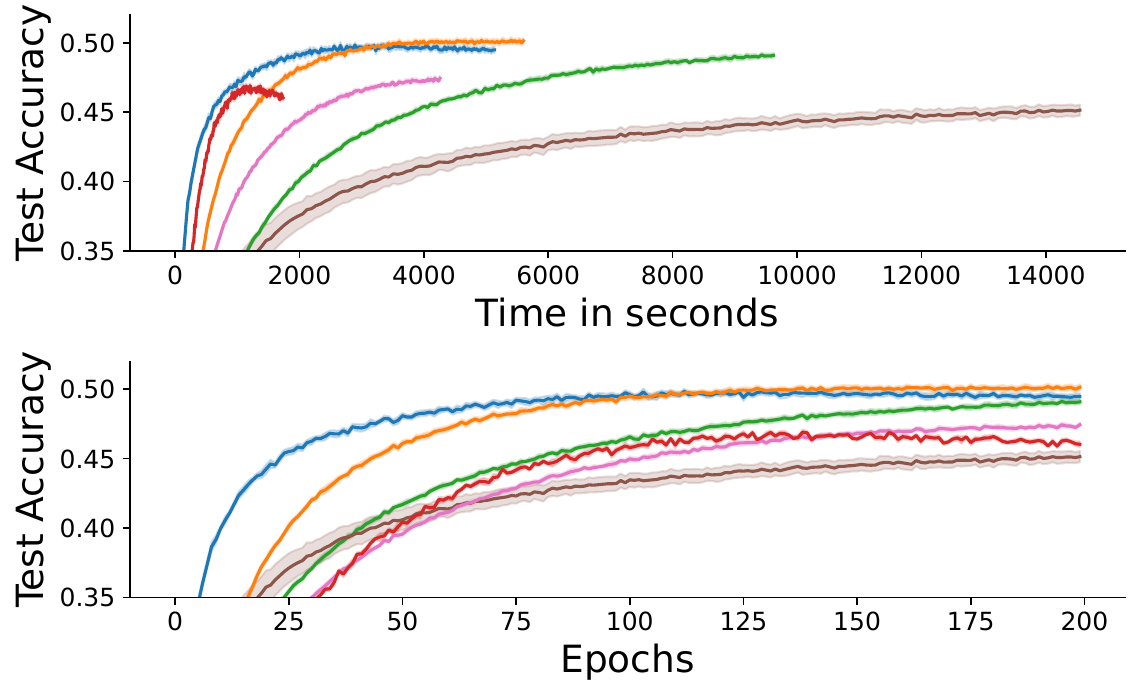}
     \label{fig:optimization_CIFAR100_3c3d_time}
}
\subfigure[CIFAR-100 All-CNN-C]{
     \includegraphics[width=0.93\columnwidth]{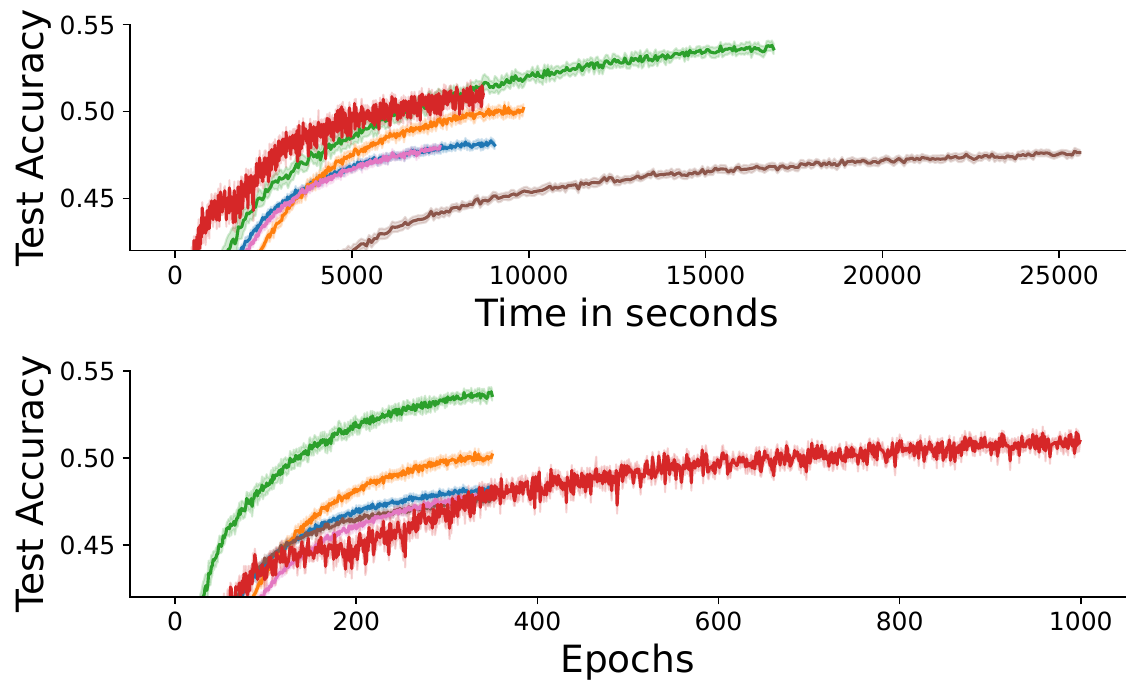}
     \label{fig:optimization_CIFAR100_CNN_time}
}
\includegraphics[width=0.9\columnwidth]{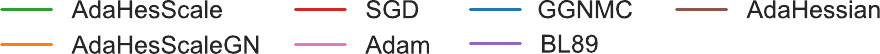}
\vspace{-0.2cm}
\caption{CIFAR-100 3C3D and CIFAR-100 ALL-CNN classification tasks. (top) We show the time taken by each algorithm in seconds, and (bottom) we show the learning curves in the number of epochs. The shaded area represents the standard error. BL89 achieves lower than $0.35$ and is not visible.}
\label{fig:optimization_CIFAR100_time}
\end{figure}

Fig.\ \ref{fig:optimization_CIFAR100_3c3d_time} and Fig.\ \ref{fig:optimization_CIFAR100_CNN_time} show the results on CIFAR-100 ALL-CNN and CIFAR-100 3C3D tasks against the number of epochs and against time in seconds. In both experiments, we used $\beta_1=0.9$ and $\beta_2=0.999$ for all adaptive methods. We run SGD for the time that matches the maximum time taken by any optimizer to have a fair comparison since it requires a small cost relative to other methods (Fig.\ \ref{fig:optimization_CIFAR100_CNN_time}) unless the performance starts to go down within the predetermined number of epochs in the experiment (Fig.\ \ref{fig:optimization_CIFAR100_3c3d_time}).

The performance of each method is averaged over $30$ independent runs. Each independent run has the same initialization across all algorithms. We performed a hyperparameter search for each method to find the best step size. Using each method's best step size on the validation set, we show the performance of the method against the time in seconds needed to complete the required number of epochs, which better depicts the computational efficiency of the methods.

In both CIFAR-100 3C3D and CIFAR-100 ALL-CNN, we notice that AdaHessian performed worse than all methods except BL89. This result is aligned with AdaHessian's inability to accurately approximate the Hessian diagonals, as shown in Fig.\ \ref{fig:approximation_quality}. Moreover, AdaHessian required more time than all other methods. While being time-efficient, AdaHesScaleGN consistently outperformed all methods in CIFAR-100 3C3D, and it outperformed most methods except AdaHesScale in CIFAR-100 ALL-CNN. Our experiments indicate that incorporating HesScale and HesScaleGN approximations in optimization methods can be of significant performance advantage in both computation and accuracy. AdaHesScale and AdaHesScaleGN outperformed other optimizers, likely due to their accurate approximation of the diagonals of the Hessian and GGN, respectively. We provide the sensitivity analysis in Appendix \ref{appendix:additional-classification}.


\section{Reinforcement Learning Experiments}
We investigate the performance of AdaHesScale against other optimizers when used with two reinforcement learning algorithms, A2C \cite{mnih2016asynchronous} and PPO \cite{schulman2017proximal}, on the MuJoCo environments \cite{todorov2012mujoco}. We exclude optimizers based on GGN and GGNMC since their BackPACK implementation is limited to classification and regression. Then, we investigate the effect of step-size scaling on robustness and stability with AdaHesScale and Adam. In this section's experiments, we use MLPs of two hidden layers each containing $64$ units with \emph{tanh} activations, similar to the architectures used in CleanRL \cite{huang2022cleanrl}.

\begin{figure*}
\centering
\includegraphics[width=\textwidth]{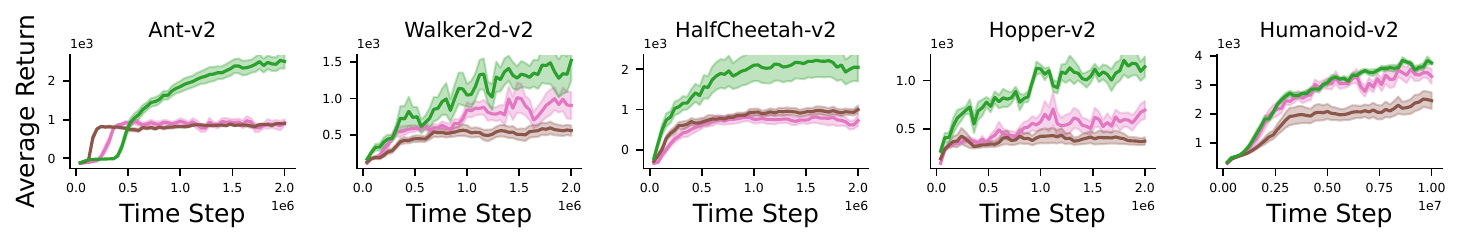}
\includegraphics[width=\textwidth]{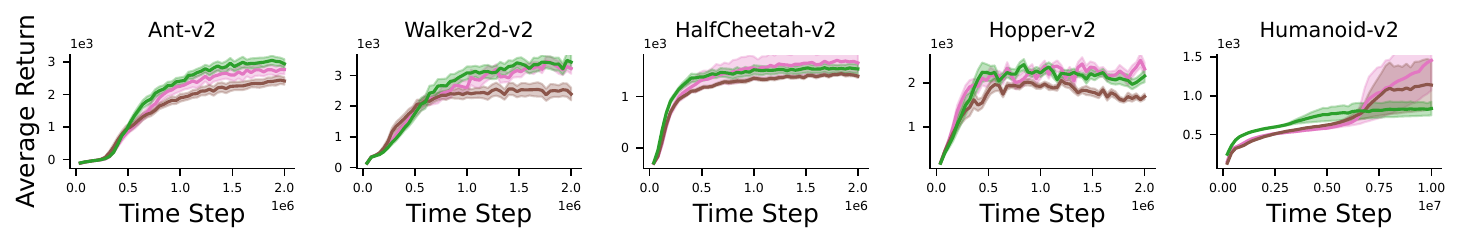}
\includegraphics[width=0.4\textwidth]{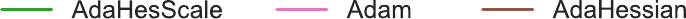}
\caption{Performance of A2C (first row) and PPO (second row) with AdaHesScale, Adam, and AdaHessian on $5$ MuJoCo environments. We show the undiscounted return averaged over $10$ independent runs. The shaded area represents the standard error.}
\label{fig:rl-ppo-a2c}
\end{figure*}

\subsection{Performance}
We start by focusing on a performance-based comparison. Fig.\ \ref{fig:rl-ppo-a2c} shows the performance of A2C/PPO with AdaHesScale against A2C/PPO with Adam and AdaHessian on five environments. We add results for five other MuJoCo environments to Appendix \ref{appendix:rl-additional}. In A2C, we observe that AdaHesScale significantly boosts performance on Ant, Walker2d, HalfCheetah, and Hopper compared to Adam and AdaHessian. In PPO, AdaHesScale performs similarly to Adam and outperforms AdaHessian in most environments.

\subsection{Robustness and Stability}
Next, we study the effect of step-size scaling using the HesScale approximation on robustness and stability with AdaHesScale, which we call \emph{Scaled AdaHesScale}, and Adam, which we call \emph{Scaled Adam}. We used a trust-region radius $\Delta = 10^{-8}$ and applied the step-size scaling mechanism on both the actor and the critic networks. Fig.\ \ref{fig:rl-robustness} shows the robustness of the methods for the step-size choice. For each environment, we normalized the average return between the minimum and maximum values across both optimizers to have the same range $[0,1]$. We observe that our step-size scaling makes both scaled optimizers insensitive to the step-size choice (compared to Adam) with PPO in all MuJoCo environments except for Humanoid, which does not benefit from the step-size scaling. We also redo the same experiment with the A2C algorithm in Appendix \ref{appendix:rl-additional}.

\begin{figure}[t]
\centering
\includegraphics[width=\columnwidth]{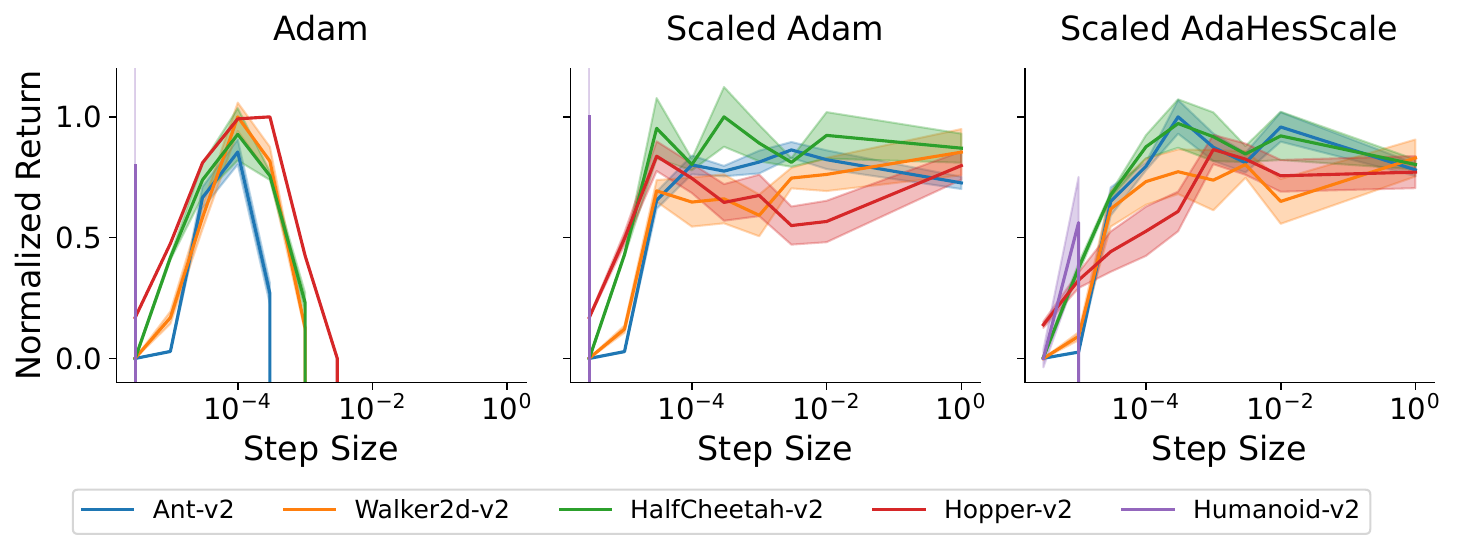}
\vspace{-0.5cm}
\caption{Robustness of HesScale-based step-size scaling with AdaHesScale and Adam on $5$ MuJoCo environments. We show the undiscounted return averaged over $10$ independent runs. The shaded area represents the standard error.}
\label{fig:rl-robustness}
\end{figure}

Finally, we examine the effectiveness of step-size scaling in stability using the \emph{UR-Reacher}, a real-robot task introduced by \cite{mahmood2018benchmarking, mahmood2018setting} to benchmark RL algorithms on real robots, which has since been used or adopted in a number of works \cite{lucchi2020robo, lan2021model, yuan2022asynchronous, farrahi2023reducing, che2023correcting, wang2023real}. The goal is to move the end effector of the UR5 robotic arm in the 2d space and reach some specified point. In challenging environments, the default step size can lead to large updates, causing the policy to deteriorate with time \cite{dohare2023overcoming}. However, the step-size scaling mechanism could mitigate this issue, which lowers the step size when the update is too big. Fig.\ \ref{fig:rl-robotics} shows five runs for Adam and Scaled Adam each, where the learned policy by Adam deteriorates for many runs after reaching a maximum performance at around $80$K steps, supporting the observation by \citet{dohare2023overcoming}. On the other hand, step-size scaling maintains consistent performance for Scaled Adam for all five runs, demonstrating its effectiveness in addressing the instability issue.

\vspace{-0.3cm}
\begin{figure}[ht]
\centering
\includegraphics[width=\columnwidth]{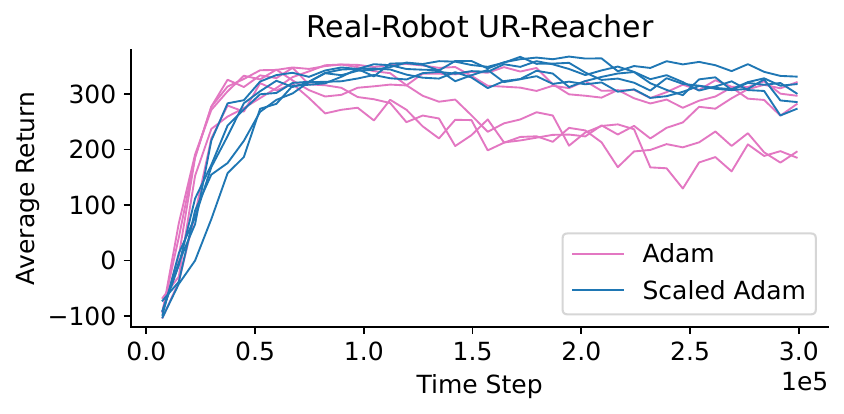}
\vspace{-0.5cm}
\caption{Performance of HesScale-based step-size scaling with Adam against standard Adam on UR-Reacher robotics task. We show the undiscounted return of five different independent runs.}
\label{fig:rl-robotics}
\end{figure}


\section{Conclusion}
HesScale is a scalable and efficient second-order method for approximating the diagonals of the Hessian at every network layer.
Our results showed that our methods provide a more accurate approximation for the Hessian diagonals while requiring small additional computations. 
We demonstrated how HesScale can be used to build efficient second-order optimization methods for both supervised reinforcement learning. We also showed how HesScale can be used to build step-size scaling mechanisms to help improve the stability and robustness of reinforcement learning methods.


\section*{Acknowlegement}
We gratefully acknowledge funding from the Canada CIFAR AI Chairs program, the Reinforcement Learning and Artificial Intelligence (RLAI) laboratory, the Alberta Machine Intelligence Institute (Amii), and the Natural Sciences and Engineering Research Council (NSERC) of Canada. We would also like to thank the Digital Research Alliance of Canada for providing the computational resources needed. Resources used in preparing this research were provided, in part, by the Province of Ontario, the Government of Canada through CIFAR, and companies sponsoring Vector Institute.


\section*{Impact Statement}
This paper presents work whose goal is to advance the field of Machine Learning. There are many potential societal consequences of our work, none of which we feel must be specifically highlighted here.


\bibliographystyle{apalike}
\bibliography{references.bib}


\clearpage
\appendix
\onecolumn

\section{Convergence proof of AdaHesScale and AdaHesScaleGN}
\label{appendix:convergence-proof}
In this section, we provide a convergence proof for AdaHesScale and AdaHesScaleGN that resembles the proof given by \citet{zaheer2018adaptive} for the convergence of adaptive optimization methods (e.g., Adam) in nonconvex problems. The following proof shows the convergence to a stationary point up to the statistical limit of the variance of the gradients, where $\| \nabla f(\boldsymbol\theta) \|^2 \leq \delta$ represent a $\delta$-accurate solution and is used to measure the stationarity of $\boldsymbol\theta$. Nonconvex optimization problems can be written as:
\begin{align*}
\min_{\boldsymbol\theta \in \mathbb{R}^d} f (\boldsymbol\theta) &\doteq \mathbb{E}_{S \sim P} \left[ \mathcal{L}(\boldsymbol\theta, S)\right]
\end{align*}
where $f$ is the expected loss, $\mathcal{L}$ is the sample loss, $S$ is a random variable for samples and $\boldsymbol\theta$ is a vector of weights parametrizing $\mathcal{L}$. We assume that $\mathcal{L}$ is $L$-smooth, meaning that there exist a constant $L$ that satisfy
\begin{align}
\| \nabla \mathcal{L}(\boldsymbol\theta_1, s) - \nabla \mathcal{L}(\boldsymbol\theta_2, s) \| \leq L \| \boldsymbol\theta_2 - \boldsymbol\theta_1 \|, \; \forall \boldsymbol\theta_1, \boldsymbol\theta_2 \in \mathbb{R}^d, s \in \mathcal{S}.
\end{align}

Similar to the proof by \citet{zaheer2018adaptive}, we further assume that $\mathcal{L}$ has bounded gradients $|\nabla[\mathcal{L}(\boldsymbol\theta, s)]_i|\leq G, \forall i \in \mathbb{R}^d, s\in \mathcal{S}$ and bounded variance in the gradients $\mathbb{E}\| \nabla\mathcal{L}(\boldsymbol\theta, S) - \nabla f(\boldsymbol\theta)  \|^2 \leq \sigma^2, \forall \boldsymbol\theta \in \mathbb{R}^d$. Note that the assumption of L-smoothness on the sample loss result in L-smooth expected loss too, which is given by $\| \nabla f(\boldsymbol\theta_1) -\nabla f(\boldsymbol\theta_2) \| \leq L \| \boldsymbol\theta_1 -\boldsymbol\theta_2 \|$.

Remember that the update rule of AdaHesScale and AdaHesScale-GN can be written as follows when the parameters are stacked in a single vector $\boldsymbol\theta$:
\begin{align*}
\theta_{t+1, i} = \theta_{t, i} - \alpha \frac{m_{t,i}}{\sqrt{v_{t,i}} + \epsilon},
\end{align*}
where $\vm_t$ and $\vv_t$ are exponential moving averages of the first derivatives $\vg_t$ and second derivatives $\vh_t$, respectively. Similar to \citet{zaheer2018adaptive}, we write the proof for $\beta_1=0$ making $m_{t,i} = g_{t,i}$ for the sake of simplicity. However, this proof should extend to the general case.

Since the expected loss $f$ is $L$-smooth, we can write the following:
\begin{align}
f(\boldsymbol\theta_{t+1} ) &\leq  f(\boldsymbol\theta_{t}) + (\nabla f(\boldsymbol\theta_t))^{\top} (\boldsymbol\theta_{t+1} - \boldsymbol\theta_t) + \frac{L}{2} \| \boldsymbol\theta_{t+1} - \boldsymbol\theta_t \|_2^2\\
&= f(\boldsymbol\theta_t) - \alpha \sum_{i=1}^{d} \left( \nabla[f(\boldsymbol\theta_t)]_i \frac{g_{t,i}}{\sqrt{v_{t,i}}+\epsilon} \right) + \frac{L \alpha^2}{2} \sum_{i=1}^{d} \frac{g_{t,i}^2}{(\sqrt{v_{t,i}}+\epsilon)^2}.
\end{align}

Next, we take the conditional expectation of $f(\boldsymbol\theta_{t+1})$ as follows:
\begin{align*}
\mathbb{E}_t [f(\boldsymbol\theta_{t+1}) | \boldsymbol\theta_t] &\leq f(\boldsymbol\theta_t) - \alpha \sum_{i=1}^{d} \left( [\nabla f(\boldsymbol\theta_t)]_i \mathbb{E}_t\left[\frac{g_{t,i}}{\sqrt{v_{t,i}}+\epsilon}  \right] \right)+ \frac{L \alpha^2}{2} \sum_{i=1}^{d}\mathbb{E}_t \left[\frac{g_{t,i}^2}{(\sqrt{v_{t,i}}+\epsilon)^2} \right]\\
&= f(\boldsymbol\theta_t) - \alpha \sum_{i=1}^{d} \left( [\nabla f(\boldsymbol\theta_t)]_i \mathbb{E}_t\left[\frac{g_{t,i}}{ \sqrt{v_{t,i}} +\epsilon} - \frac{g_{t,i}}{\epsilon} + \frac{g_{t,i}}{\epsilon}  \right] \right)+ \frac{L \alpha^2}{2} \sum_{i=1}^{d}\mathbb{E}_t \left[\frac{g_{t,i}^2}{(\sqrt{v_{t,i}}+\epsilon)^2} \right]\\
&= f(\boldsymbol\theta_t) - \alpha \sum_{i=1}^{d} \left( [\nabla f(\boldsymbol\theta_t)]_i \mathbb{E}_t\left[\frac{g_{t,i}}{ \sqrt{v_{t,i}} +\epsilon} - \frac{g_{t,i}}{\epsilon} \right] \right) - \alpha \sum_{i=1}^d \frac{[\nabla f(\boldsymbol\theta_t)]_i^2}{\epsilon} + \frac{L \alpha^2}{2} \sum_{i=1}^{d}\mathbb{E}_t \left[\frac{g_{t,i}^2}{(\sqrt{v_{t,i}}+\epsilon)^2} \right]\\
&\leq f(\boldsymbol\theta_t) + \alpha \left|\sum_{i=1}^{d}  [\nabla f(\boldsymbol\theta_t)]_i \mathbb{E}_t\left[\frac{g_{t,i}}{ \sqrt{v_{t,i}} +\epsilon} - \frac{g_{t,i}}{\epsilon} \right] \right| - \alpha \sum_{i=1}^d \frac{[\nabla f(\boldsymbol\theta_t)]_i^2}{\epsilon} + \frac{L \alpha^2}{2\epsilon^2} \sum_{i=1}^{d}\mathbb{E}_t \left[g_{t,i}^2 \right]\\
&\leq f(\boldsymbol\theta_t) + \alpha \sum_{i=1}^{d}  \left|\nabla[f(\boldsymbol\theta_t)]_i\right| \mathbb{E}_t\Bigg[\underbrace{\left| \frac{g_{t,i}}{ \sqrt{v_{t,i}} +\epsilon} - \frac{g_{t,i}}{\epsilon}  \right|}_{T} \Bigg] - \alpha \sum_{i=1}^d \frac{[\nabla f(\boldsymbol\theta_t)]_i^2}{\epsilon} + \frac{L \alpha^2}{2\epsilon^2} \sum_{i=1}^{d}\mathbb{E}_t \left[g_{t,i}^2 \right]
\end{align*}
Note that we used $\mathbb{E}_t[g_{t,i}] = [\nabla f(\boldsymbol\theta_t)]_i$ in the third line, and $\mathbb{E}[X] \leq \mathbb{E}[|X|]$ and $|\sum_i^d x_i| \leq \sum_{i=1}^d |x_i|$ in the fifth line. We also used $\frac{1}{(a+b)^2} \leq \frac{1}{b^2}, \forall a, b > 0$ for the last term in the fourth line.

Now, we can further bound the term $T$ as follows:
\begin{align*}
T &= \left|  \frac{g_{t,i}}{ \sqrt{v_{t,i}} +\epsilon} - \frac{g_{t,i}}{\epsilon}  \right|\\
&\leq \left|  \frac{g_{t,i}}{ \sqrt{v_{t,i}} +\epsilon}\right| + \left| \frac{g_{t,i}}{\epsilon}  \right|\\
&\leq  \frac{2|g_{t,i}|}{\epsilon}.
\end{align*}
Note that we used $\frac{1}{a+b} \leq \frac{1}{b}, \forall a, b > 0$. Next, let us go back to the main inequality and further bound it as follows:
\begin{align*}
\mathbb{E}_t [f(\boldsymbol\theta_{t+1}) | \boldsymbol\theta_t] 
&\leq f(\boldsymbol\theta_t) + \frac{2\alpha}{\epsilon} \sum_{i=1}^{d} \left|\nabla[f(\boldsymbol\theta_t)]_i\right| \mathbb{E}_t\left[|g_{t,i}|\right] - \frac{\alpha}{\epsilon} \sum_{i=1}^d [\nabla f(\boldsymbol\theta_t)]_i^2 +  \frac{L \alpha^2}{2\epsilon^2} \sum_{i=1}^{d}\mathbb{E}_t \left[g_{t,i}^2 \right] \\
&\leq f(\boldsymbol\theta_t) + \frac{2\alpha G^2}{\epsilon} - \frac{\alpha}{\epsilon} \sum_{i=1}^d [\nabla f(\boldsymbol\theta_t)]_i^2 +  \frac{L \alpha^2}{2\epsilon^2} \sum_{i=1}^{d}\mathbb{E}_t \left[g_{t,i}^2 \right].
\end{align*}
We used in the second inequality the bounded sample-gradient assumption, $|g_{t_i}|\leq G$. Moreover, we can have the same bound for the expected gradient as follows:
\begin{align*}
    \|\nabla f(\boldsymbol\theta)\| &= \|\mathbb{E}_S[\nabla\mathcal{L}(\boldsymbol\theta, S)]\| \leq \mathbb{E}_S[\|\nabla\mathcal{L}(\boldsymbol\theta, S)\|] \leq G,
\end{align*}
which leads to bounded expected gradients in each dimension, $|\nabla[f(\boldsymbol\theta)]_i|\leq G$.

From the bounded variance assumption, we know that the $\mathbb{E}[\|\vg_{t}\|^2]$ is bounded as follows:
\begin{align*}
\mathbb{E}[\|\vg_{t}\|^2] &\leq \frac{\sigma^2}{b_t} + \|\nabla f(\boldsymbol\theta_t)\|^2.
\end{align*}

We can further bound $\mathbb{E}_t [f(\boldsymbol\theta_{t+1}) | \boldsymbol\theta_t]$ as follows:
\begin{align*}
\mathbb{E}_t [f(\boldsymbol\theta_{t+1}) | \boldsymbol\theta_t] &\leq f(\boldsymbol\theta_t) + \frac{2\alpha G^2}{\epsilon} - \frac{\alpha}{\epsilon} \sum_{i=1}^d [\nabla f(\boldsymbol\theta_t)]_i^2 +  \frac{L \alpha^2}{2\epsilon^2} \left(\frac{\sigma^2}{b_t} + \|\nabla f(\boldsymbol\theta_t)\|^2\right) \\
&= f(\boldsymbol\theta_t) + \frac{2\alpha G^2}{\epsilon} - \frac{\alpha}{\epsilon} \|\nabla f(\boldsymbol\theta_t)\|^2 +  \frac{L \alpha^2}{2\epsilon^2} \left(\frac{\sigma^2}{b_t} + \|\nabla f(\boldsymbol\theta_t)\|^2\right) \\
&= f(\boldsymbol\theta_t) - \|\nabla f(\boldsymbol\theta_t)\|^2 \left(\frac{\alpha}{\epsilon}-\frac{L \alpha^2}{2\epsilon^2}\right) + \frac{L\alpha^2 \sigma^2 + 4\alpha \epsilon b_t G^2}{2b\epsilon^2}.
\end{align*}

Rearranging the inequality, taking expectations on both sides, and using the telescopic sum, we can write the following:
\begin{align*}
\left(\frac{2\epsilon\alpha - L\alpha^2}{2\epsilon^2}\right) \sum_{t=1}^{T} \mathbb{E} \|\nabla f(\boldsymbol\theta_t)\|^2  &\leq f(\boldsymbol\theta_1) - \mathbb{E} [f(\boldsymbol\theta_{T+1})] + \frac{T (L\alpha^2 \sigma^2 + 4\alpha\epsilon b_t G^2)}{2b\epsilon^2}.
\end{align*}
Multiplying both sides by $\frac{2\epsilon^2}{T(2\epsilon\alpha - L\alpha^2)}$ and using the fact that $f$ is the lowest at the global minimum $\boldsymbol\theta^*$: $f(\boldsymbol\theta_{T+1}) \geq f(\boldsymbol\theta^*)$ as follows:
\begin{align*}
\frac{1}{T}\sum_{t=1}^{T} \mathbb{E} \|\nabla f(\boldsymbol\theta_t)\|^2  &\leq 2\epsilon^2\frac{f(\boldsymbol\theta_1) - f(\boldsymbol\theta^*)}{T(2\epsilon\alpha - L\alpha^2)} +  \frac{2\epsilon^2 (L\alpha \sigma^2 + 4 b_t G^2)}{b_t (2\epsilon - L\alpha)},
\end{align*}
which shows the algorithm converges to a stationary point. However, in the limit $T\rightarrow \infty$, the algorithm has to have an increasing batch size similarly to \citet{zaheer2018adaptive}.


\section{Hessian diagonals of the log-likelihood function for two common distributions}
\label{appendix:linear-likelihood-hessian}
Here, we provide the diagonals of the Hessian matrix of functions involving the log-likelihood of two common distributions: a normal distribution and a categorical distribution with probabilities represented by a softmax function, which we refer to as a \emph{softmax distribution}. We show that the exact computations of the diagonals can be computed with linear complexity since computing the diagonal elements does not depend on off-diagonals in these cases. In the following, we consider the softmax and normal distributions, and we write the exact Hessian diagonals in both cases.

\subsection{Softmax distribution}
Consider a cross-entropy function for a discrete probability distribution as $f\doteq -\sum_{i=1}^{|\vq|}p_i\log q_i(\boldsymbol\theta)$, where $\vq$ is a probability vector that depends on a parameter vector $\boldsymbol\theta$, and $\vp$ is a one-hot vector for the target class. For softmax distributions, $\vq$ is parametrized by a softmax function $\vq \doteq {e^{\boldsymbol\theta}/\sum_{i=1}^{|{\boldsymbol q}|} e^{\theta_i}}$. In this case, we can write the gradient of the cross-entropy function with respect to $\boldsymbol\theta$ as
\[ \nabla_{\boldsymbol\theta}f(\boldsymbol\theta) = \vq-\vp.\]
Next, we write the exact diagonal elements of the Hessian matrix as follows:
\[ \operatorname{diag}(\mH_\vtheta) = \operatorname{diag}(\nabla_{\boldsymbol\theta}(\vq-\vp)) = \vq-\vq^2,\]
where $\vq^2$ denotes element-wise squaring of $\vq$, and $\nabla$ operator applied to a vector denotes Jacobian. Computing the exact diagonals of the Hessian matrix depends only on vector operations, which means that we can compute with linear complexity in the network's output dimension. The cross-entropy loss is used with softmax distribution in many important tasks, such as supervised classification and discrete reinforcement learning control with parameterized policies \cite{chan2022greedification}.
\subsection{Multivariate normal distribution with diagonal covariance}
For a multivariate normal distribution with diagonal covariance, the parameter vector $\boldsymbol\theta$ is determined by the mean-variance vector pair: $\boldsymbol\theta \doteq (\boldsymbol \mu, \vsigma^2)$. The log-likelihood of a random vector $\vx$ drawn from this distribution can be written as
\begin{align*}
\log q(\vx; \boldsymbol\mu, \vsigma^2) &= -\frac{1}{2} (\vx-\boldsymbol\mu)^\top \boldsymbol D(\vsigma^2)^{-1} (\vx-\boldsymbol\mu) - \frac{1}{2}\log (|\boldsymbol{D}(\vsigma^2)|) + c\\
&= -\frac{1}{2} (\vx-\boldsymbol\mu)^\top \boldsymbol D(\vsigma^2)^{-1} (\vx-\boldsymbol\mu) - \frac{1}{2}\log (\sum_{i=1}^{|\vsigma|} \sigma^2_i ) + c,
\end{align*}
where $\boldsymbol D(\vsigma^2)$ gives a diagonal matrix with $\vsigma^2$ in its diagonal, $|\mM|$ is the determinant of a matrix $\mM$ and $c$ is some constant. We can write the gradients of the log-likelihood function with respect to $\boldsymbol \mu$ and $\vsigma^2$ as follows:
\begin{align*}
\nabla_{\boldsymbol \mu}\log q(\vx; \boldsymbol\mu, \vsigma^2) &= \boldsymbol D(\vsigma^2)^{-1}  (\vx - \boldsymbol\mu) = (\vx - \boldsymbol\mu) \oslash \vsigma^2,\\
\nabla_{\vsigma^2}\log q(\vx; \boldsymbol\mu, \vsigma^2) &= \frac{1}{2} \big[(\vx - \boldsymbol\mu)^2 \oslash \vsigma^2 - \mathbf{1}\big] \oslash \vsigma^2,
\end{align*}
where $\mathbf{1}$ is an all-ones vector, and $\oslash$ denotes element-wise division.
Finally, we write the exact diagonals of the Hessian matrix as
\[ \operatorname{diag}(\mH_\vmu) = \operatorname{diag}(\nabla_{\vmu} (\vx - \boldsymbol\mu) \oslash \vsigma^2) = -\mathbf{1}\oslash\vsigma^2,\]
\[ \operatorname{diag}(\mH_{\vsigma^2}) = \operatorname{diag}\Big(\nabla_{\vsigma^2} \left[\frac{1}{2} \left[(\vx - \boldsymbol\mu)^2 \oslash \vsigma^2 - \mathbf{1}\right] \oslash \vsigma^2 \right]\Big) = \big[0.5\mathbf{1} -(\vx - \boldsymbol\mu)^2 \oslash \vsigma^2\big] \oslash \vsigma^4. \]
Clearly, the gradient and the exact Hessian diagonals can be computed with linear complexity in the network's output dimension. Log-likelihood functions for normal distributions are used in many important problems, such as variational inference and continuous reinforcement learning control. 


\section{HesScale for reinforcement learning loss functions}
\label{appendix:rl-loss-hesscale}
\subsection{Policy gradient loss}
The policy gradient loss is computed as the multiplication between the negative log-likelihood and some scalar value that determines how good or bad the action or trajectory selected is. The policy gradient loss is given by
\begin{align*}
    \mathcal{L}_\text{PG}(s,a) &= - \log q(a|s; \boldsymbol\mu, \vsigma^2) A
\end{align*}
where $A$ is the return, advantage, or TD error, depending on the algorithm used. The Hessian diagonals of such loss would differ by the multiplicative factor $A$ from the Hessian diagonals w.r.t.\ the log-likelihood defined in Appendix \ref{appendix:linear-likelihood-hessian} for both continuous or categorical distributions. The A2C \cite{mnih2016asynchronous} algorithm uses the advantage function for $A$.

\subsection{Value loss}
The value loss used in policy gradient methods can be implemented as a regular regression error between the value and its bootstrapped target. Thus, the Hessian diagonals of such loss function would be ones since the Hessian matrix is the identity.

\subsection{PPO policy loss}
The proximal policy optimization (PPO, \citealp{schulman2017proximal}) methods depend on a surrogate loss different from the one presented in the previous section, unlike A2C \cite{mnih2016asynchronous}. The surrogate loss depends on the ratio between the current action probability and the old action probability. Thus, we focus in this section on deriving the gradient and Hessian diagonals of a multivariate normal distribution with diagonal covariance. The Hessian diagonals of such loss would differ by the multiplicative factor $A$ from the Hessian diagonals w.r.t.\ the action probability. The action probability of a multivariate normal distribution with diagonal covariance is given by
\begin{align*}
p(\va;\boldsymbol\mu,\boldsymbol\sigma^2) &= \frac{1}{\sqrt{2\pi |\mD(\boldsymbol\sigma^2)|}} \exp^{\frac{1}{2} (\va-\boldsymbol\mu)^\top \boldsymbol D(\vsigma^2)^{-1} (\va-\boldsymbol\mu)}.
\end{align*}
We can write the gradients of the Gaussian probability function with respect to $\boldsymbol \mu$ and $\vsigma^2$ as follows:
\begin{align*}
    \nabla_{\boldsymbol\mu} p(\va;\boldsymbol\mu,\boldsymbol\sigma^2)
    &= p(\va;\boldsymbol\mu,\boldsymbol\sigma^2) (\va-\boldsymbol\mu) \oslash {\boldsymbol\sigma^2},
\end{align*}
\begin{align*}
    \nabla_{\boldsymbol\sigma^2} p(\va;\boldsymbol\mu,\boldsymbol\sigma^2)
    &= p(\va;\boldsymbol\mu,\boldsymbol\sigma^2) \left((\va-\boldsymbol\mu)^2 \oslash \boldsymbol\sigma^2 - 1 \right) \oslash (2\boldsymbol\sigma^2).
\end{align*}
Finally, we write the exact diagonals of the Hessian matrix as
\begin{align*}
    \operatorname{diag}(\mH_\vmu)  &= \left(\left(\va-\boldsymbol\mu\right)\circ \nabla_{\boldsymbol\mu} p(\va;\boldsymbol\mu,\boldsymbol\sigma^2)  - p(\va;\boldsymbol\mu,\boldsymbol\sigma^2) \right) \oslash \boldsymbol\sigma^2,
\end{align*}
\begin{align*}
    \operatorname{diag}(\mH_{\boldsymbol\sigma^2})
    &= (\boldsymbol\sigma^2 \circ \nabla_{\boldsymbol\sigma^2} p(\va;\boldsymbol\mu,\boldsymbol\sigma^2) -  p(\va;\boldsymbol\mu,\boldsymbol\sigma^2)) \left((\va-\boldsymbol\mu)^2 \oslash \boldsymbol\sigma^2 - 1\right) \oslash(2 \boldsymbol\sigma^4) - 0.5 p(\va;\boldsymbol\mu,\boldsymbol\sigma^2)(\va-\boldsymbol\mu)^2 \oslash \boldsymbol\sigma^6.
\end{align*}


\section{Scalability}
\label{appendix:scalability}
We perform another experiment to evaluate the computational cost of our optimizers.
Our Hessian approximation methods and corresponding optimizers have linear computational complexity, which can be seen from Eq.\ \ref{equ:hessian-approx-a} and Eq.\ \ref{equ:hessian-approx-W}.
However, computing second-order information in optimizers still incurs extra computations compared to first-order optimizers, which may impact how the total computations scale with the number of parameters.
Hence, we compare the computational cost of our optimizers with others for various numbers of parameters. More specifically, we measure the update time of each optimizer, which is the time needed to backpropagate first-order and second-order information and update the parameters.

We designed two experiments to study the computational cost of first-order and second-order optimizers. In the first experiment, we used a neural network with a single hidden layer. The network has 64 inputs and 512 hidden units with \emph{tanh} activations. We study the increase in computational time when increasing the number of outputs exponentially, which roughly doubles the number of parameters. The set of values we use for the number of outputs is $\{2^4, 2^5, 2^6, 2^7, 2^8, 2^9\}$. The results of this experiment are shown in Fig.\ \ref{fig:comp-cost-single-layer}. In the second experiment, we used a neural network with multi-layers, each containing 512 hidden units with \emph{tanh} activations. The network has 64 inputs and 100 outputs. We study the increase in computational time when increasing the number of layers exponentially, which also roughly doubles the number of parameters. The set of values we use for the number of layers is $\{1, 2, 4, 8, 16, 32, 64, 128\}$. The results are shown in Fig.\ \ref{fig:comp-cost-layers}. The points in Fig.\ \ref{fig:comp-cost-single-layer} and Fig.\ \ref{fig:comp-cost-layers} are averaged over 30 updates. The standard errors of the means of these points are smaller than the width of each line. On average, we notice that the cost of AdaHessian, AdaHesScale, and AdaHesScaleGN are three, two, and 1.25 times the cost of Adam, respectively. It is clear that our methods are among the most computationally efficient approximation method for Hessian diagonals.
\begin{figure}[ht]
\centering
\subfigure[Increasing number of outputs in a neural network]{
     \includegraphics[width=0.45\textwidth]{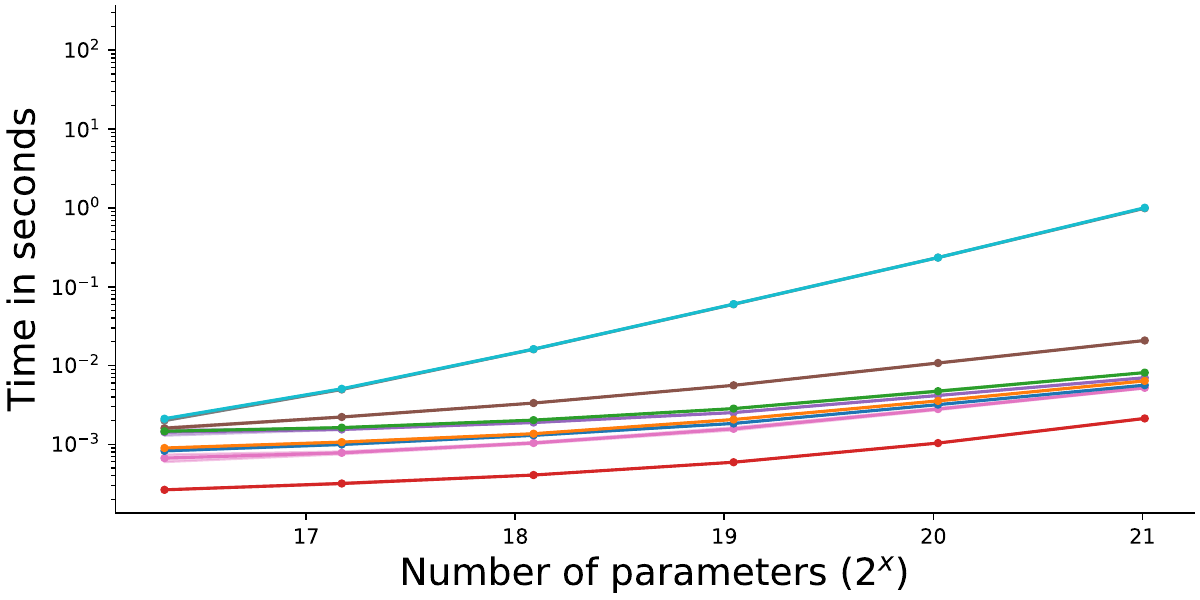}
     \label{fig:comp-cost-single-layer}
}
\subfigure[Increasing number of layers in a neural network]{
     \centering
     \includegraphics[width=0.45\textwidth]{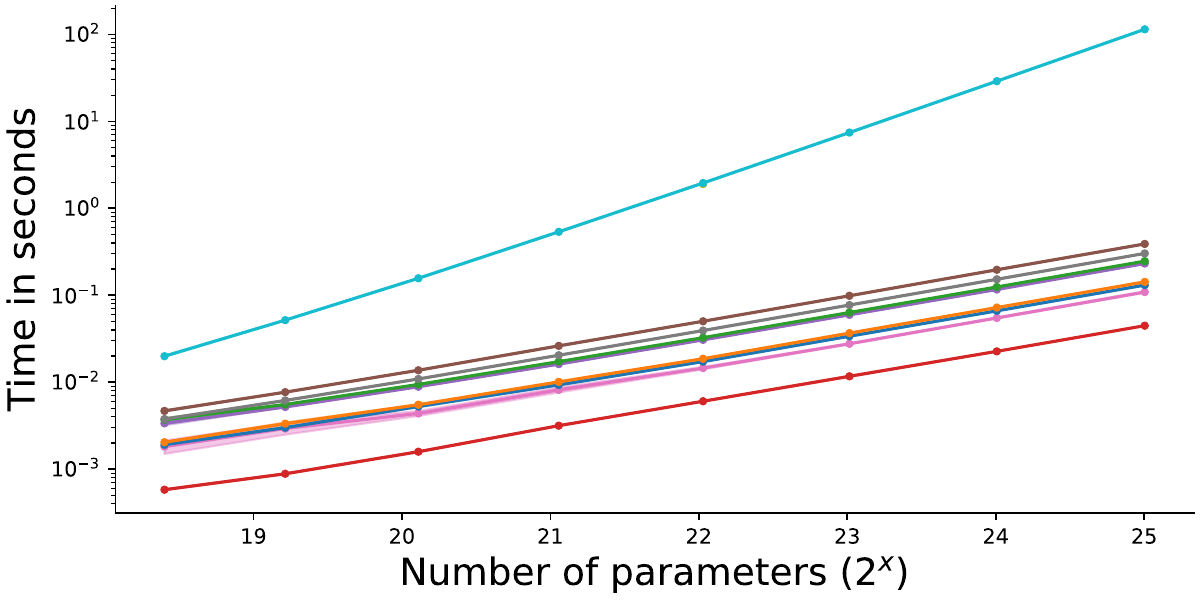}
     \label{fig:comp-cost-layers}
}
 \includegraphics[width=0.4\textwidth]{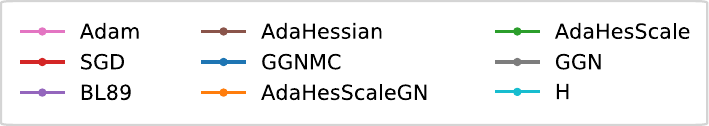}
\caption{The average computation time for each step of an update is shown for different optimizers. The computed update time is the time needed by each optimizer to backpropagate gradients or second-order information and to update the parameters of the network. GGN overlaps with H in (a).}
\label{fig:computational_cost}
\end{figure}
\newpage

\section{Conditions of Weight Structure for Hessian Diagonality}
\label{appendix:diagonality-conditions}
This section aims to provide a theoretical analysis of the weight structure that makes the Hessian blocks with respect to the pre-activations diagonal or dominantly diagonal. Assuming that the matrix $\mH_{l}\in\mathbb{R}^{n\times n}$ containing second derivatives at the $l$-layer is diagonal, we look for the conditions on $\mW_{l}\in\mathbb{R}^{n\times k}$ and $\mH_{l}$ that guarantee $\mH_{l-1}\in \mathbb{R}^{k\times k}$ to be diagonal. Formally, we want to analyze: $\min_{\mW_l} \|(\mW_{l}^\top \mH_{l} \mW_{l})\circ \mI - \mW_{l}^\top \mH_{l} \mW_{l} \|^2_F$, where $\circ$ denotes the Hadamard product.

Let us start by writing the quantity $J$ we want to minimize and simplify the expression. We let $\mA=\mW_{l}^\top \mH_{l} \mW_{l}$ and write the expression as follows:
\begin{align*}
    J &= \|(\mW_{l}^\top \mH_{l} \mW_{l})\circ \mI - \mW_{l}^\top \mH_{l} \mW_{l} \|^2_F\\
    &= \|\mA_l\circ \mI - \mA_l \|^2_F \shorteqnote{(substitute $\mW_{l}^\top \mH_{l} \mW_{l}$ by $\mA_l$)} \\
    &= Tr\left(\left(\mA_l\circ \mI - \mA_l )^\top (\mA_l\circ \mI - \mA_l \right)\right)  \shorteqnote{(remember that $\|\mA_l\|^2_F=Tr(\mA_l^\top \mA_l)$)} \\
    &= Tr\left(\left(\mA_l^\top\circ \mI - \mA_l^\top ) (\mA_l\circ \mI - \mA_l \right)\right) \\
    &= Tr\left(\left(\mA_l^2 \circ \mI -\mA_l^2 \circ \mI -\mA_l^2 \circ \mI +\mA_l^\top \mA_l \right)\right) \shorteqnote{($\mA_l^2$ denotes element-wise squaring)} \\
    &= Tr\left(\left(\mA_l^\top \mA_l -\mA_l^2 \circ \mI \right)\right). \shorteqnote{(expectation on the sum of squared off-diagonal elements)}
\end{align*}
Now we can write $\nabla_{\mW_l} J$ as follows:
\begin{align*}
    \nabla_{\mW_l} J &= Tr\left(\mA_l^\top \mA_l -\mA_l^2 \circ \mI \right) \\
    &= \nabla_{\mW_l}Tr\left(\mA_l^\top \mA_l -\mA_l^2 \circ \mI \right)\\
    &= \nabla_{\mW_l}Tr\left(\mA_l^\top \mA_l\right) - \nabla_{\mW_l} Tr\left(\mA_l^2 \circ \mI \right). \shorteqnote{$\left(Tr(\mA +\mB) = Tr(\mA) + Tr(\mB)\right)$}
\end{align*}
Taking the derivative with respect to $\mW_l$ involves the $4$-tensor of derivatives of elements of $\mA_l$ with respect to $\mW_l$, so using the index notation for the following calculations is more convenient. We drop the subscript $l$ for clarity but emphasize that all matrices have a subscript $l$. We first write the trace of the first and second term in the index notation as follows:
\begin{align*}
    A_{i,j} = \sum_{l=1}^k W_{l,i}W_{l,j}H_{l,l}, \quad\quad \Gamma = Tr(\mA^\top \mA) = \sum_{i=1}^{k} \sum_{j=1}^{k} A^2_{i,j}, \quad \quad \Lambda = Tr(\mA^2\circ \mI) = \sum_{i=1}^{k} A^2_{i,i}.
\end{align*}

We would like to compute $\nabla_{\mW} \Gamma$ and $\nabla_{\mW} \Lambda$. One can use the chain rule as follows:
\begin{align*}
    \frac{\partial \square}{\partial W_{i,j}} &= \sum_{m=1}^k \sum_{n=1}^k \frac{\partial \square}{\partial A_{m,n}} \frac{\partial A_{m,n}}{\partial W_{i,j}}
\end{align*}
where $\square\in\{\Gamma, \Lambda\}$. Let us now calculate $\nabla_{\mA} \Gamma$ and $\nabla_{\mA} \Lambda$ as follows:
\begin{align*}
    \frac{\partial \Gamma}{\partial A_{m,n}} &= \frac{\partial}{\partial A_{m,n}} \sum_{i=1}^{k} \sum_{j=1}^{k} A^2_{i,j} = 2 A_{m,n},\\
    \frac{\partial \Lambda}{\partial A_{m,n}} &= \frac{\partial}{\partial A_{m,n}} \sum_{i=1}^{k} A^2_{i,i} = 2 A_{m,n} \delta_{m,n},
\end{align*}
where $\delta_{m,n}=1$ when $m=n$ and $0$ otherwise. Let us now calculate the elements of the second term in the chain rule, which is a $4$-tensor, as follows:
\begin{align*}
    \frac{\partial A_{m,n}}{\partial W_{i,j}} &= \frac{\partial}{\partial W_{i,j}} \left(\sum_{l=1}^k W_{l,m}W_{l,n}H_{l,l}\right)\\
    &= \sum_{l=1}^k \frac{\partial W_{l,m}}{\partial W_{i,j}} W_{l,n} H_{l,l} + \sum_{l=1}^k \frac{\partial W_{l,n}}{\partial W_{i,j}} W_{l,m} H_{l,l}  + \sum_{l=1}^k \frac{\partial H_{l,l}}{\partial W_{i,j}} W_{l,m} W_{l,n}\\
    &= \sum_{l=1}^k W_{l,n} H_{l,l} \delta_{j,m} \delta_{i,l} + \sum_{l=1}^k W_{l,m} H_{l,l} \delta_{i,l} \delta_{j,n} + R_{i,j}^{m,n} \\
    &= W_{i,n} H_{i,i} \delta_{j,m} + W_{i,m} H_{i,i} \delta_{j,n} + R_{i,j}^{m,n},
\end{align*}
where $R_{i,j}^{m,n}= \sum_{l=1}^k \frac{\partial H_{l,l}}{\partial W_{i,j}} W_{m,l} W_{n,l}$ denotes the $4$-tensor containing the derivative of the Hessian diagonals of the pre-activations with respect to the weights.

Now, we are able to compute $\nabla_{\mW} \Gamma$ and $\nabla_{\mW} \Lambda$ as follows:
\begin{align*}
    \frac{\partial \Gamma}{\partial W_{i,j}} &= \sum_{m=1}^k \sum_{n=1}^k \frac{\partial \Gamma}{\partial A_{m,n}} \frac{\partial A_{m,n}}{\partial W_{i,j}}\\
    &= \sum_{m=1}^k \sum_{n=1}^k 2 A_{m,n} \left(W_{i,n} H_{i,i} \delta_{j,m} + W_{i,m} H_{i,i} \delta_{j,n} + R_{i,j}^{m,n}\right)\\
    &= 2\sum_{n=1}^k A_{j,n} W_{i,n} H_{i,i} + 2\sum_{m=1}^k A_{m,j} W_{i,m} H_{i,i} + 2\sum_{m=1}^k \sum_{n=1}^k A_{m,n}  R_{i,j}^{m,n} \\
    &= 4\sum_{n=1}^k A_{j,n} W_{i,n} H_{i,i} + 2\sum_{m=1}^k \sum_{n=1}^k A_{m,n} R_{i,j}^{m,n} \shorteqnote{(note that $A_{m,i}=A_{i,m}$ since $\mA$ is symmetric)}
\end{align*}
\begin{align*}
    \frac{\partial \Lambda}{\partial W_{i,j}} &= \sum_{m=1}^k \sum_{n=1}^k \frac{\partial \Lambda}{\partial A_{m,n}} \frac{\partial A_{m,n}}{\partial W_{i,j}}\\
    &= \sum_{m=1}^k \sum_{n=1}^k 2 A_{m,n} \delta_{m,n} \left(W_{i,n} H_{i,i} \delta_{j,m} + W_{i,m} H_{i,i} \delta_{j,n} + R_{i,j}^{m,n}\right)\\
    &= 2\sum_{m=1}^k A_{m,m} \left(W_{i,m} H_{i,i} \delta_{j,m} + W_{i,m} H_{i,i} \delta_{j,m} + R_{i,j}^{m,m}\right)\\
    &= 2\sum_{m=1}^k A_{m,m} \left(2W_{i,m} H_{i,i} \delta_{j,m} + R_{i,j}^{m,m}\right) \\
    &= 4 A_{j,j} W_{i,j} H_{i,i} + 2\sum_{m=1}^{k} A_{m,m} R_{i,j}^{m,m},
\end{align*}
\begin{align*}
    \frac{\partial (\Gamma-\Lambda)}{\partial W_{i,j}}
    &= 4\sum_{n=1}^k A_{j,n} W_{i,n} H_{i,i} + 2\sum_{m=1}^k \sum_{n=1}^k A_{m,n} R_{i,j}^{m,n} - 4 A_{j,j} W_{i,j} H_{i,i} - 2\sum_{m=1}^{k} A_{m,m} R_{i,j}^{m,m} \\
    &= 4\sum_{n=1}^k A_{j,n} W_{i,n} H_{i,i} - 4 A_{j,j} W_{i,j} H_{i,i} +  2\sum_{m=1}^k \sum_{n=1}^k A_{m,n} R_{i,j}^{m,n} - 2\sum_{m=1}^{k} A_{m,m} R_{i,j}^{m,m} \\
    &= 4\sum_{n=1}^k A_{j,n} W_{i,n} H_{i,i} - 4 \sum_{n=1}^k A_{j,j} W_{i,j} H_{i,i} \delta_{j,n} +  2\sum_{m=1}^k \sum_{n=1}^k A_{m,n} R_{i,j}^{m,n} - 2\sum_{m=1}^{k} \sum_{n=1}^k A_{m,n} R_{i,j}^{m,n} \delta_{m,n} \\
    &= 4\sum_{n=1}^k A_{j,n} W_{i,n} H_{i,i} (1-\delta_{j,n}) +  2\sum_{m=1}^k \sum_{n=1}^k A_{m,n} R_{i,j}^{m,n}(1-\delta_{m,n}).
\end{align*}
We can finally write the element of the gradient $\nabla_{\mW} J$ as follows:
\begin{align*}
    [\nabla_{\mW} J]_{i,j} &= \nabla_{\mW}(\Gamma - \Lambda)\\
    &=4\sum_{n=1}^k A_{j,n} W_{i,n} H_{i,i} (1-\delta_{j,n}) +  2\sum_{m=1}^k \sum_{n=1}^k A_{m,n} R_{i,j}^{m,n}(1-\delta_{m,n}).
\end{align*}
We find the conditions on $\mW$ and $\mH$ that minimizes $J$ by setting $\nabla_{\mW} J$ to zero. There are two notable cases where the gradient equals zero.

\subsection{When $\mW$ is diagonal}
If $\mW$ is diagonal, we can write it as $W_{i,j}=\delta_{i,j}$. Therefore, the gradient elements are reduced to:
\begin{align*}
    [\nabla_{\mW} J]_{i,j} &= 4\sum_{n=1}^k \left(A_{j,n} W_{i,n} H_{i,i}\right)\left(1-\delta_{j,n}\right) + 2\sum_{m=1}^k \sum_{n=1}^k A_{m,n} R_{i,j}^{m,n}\left(1 - \delta_{m,n} \right)\\
    &= 4\sum_{n=1}^k A_{j,n} H_{i,i} \delta_{i,n} (1-\delta_{n,j}) + 2\sum_{m=1}^k \sum_{n=1}^k A_{m,n} R_{i,j}^{m,n}\left(1 - \delta_{m,n} \right) \\
    &= 4\sum_{n=1}^k \left(\sum_{l=1}^k W_{l,j}W_{l,n}H_{l,l}\right) H_{i,i} \delta_{i,n} (1-\delta_{n,j}) + 2\sum_{m=1}^k \sum_{n=1}^k \left(\sum_{l=1}^k W_{l,m}W_{l,n}H_{l,l}\right) R_{i,j}^{m,n}\left(1 - \delta_{m,n} \right)\\
    &= 4\sum_{n=1}^k \left(\sum_{l=1}^k \delta_{l,j}\delta_{l,n}H_{l,l}\right) H_{i,i} \delta_{i,n} (1-\delta_{n,j}) + 2\sum_{m=1}^k \sum_{n=1}^k \left(\sum_{l=1}^k \delta_{l,m}\delta_{l,n}H_{l,l}\right) R_{i,j}^{m,n}\left(1 - \delta_{m,n} \right)\\
    &= 4\sum_{n=1}^k \left(\delta_{j,n}H_{j,j}\right) H_{i,i} \delta_{i,n} (1-\delta_{n,j}) + 2\sum_{m=1}^k \sum_{n=1}^k \left( \delta_{m,n}H_{m,m}\right) R_{i,j}^{m,n}\left(1 - \delta_{m,n} \right)\\
    &= 4H_{j,j} H_{i,i} \delta_{i,j} (1-\delta_{i,j}) + 2\sum_{m=1}^k \sum_{n=1}^k H_{m,m} R_{i,j}^{m,n}\delta_{m,n}\left(1 - \delta_{m,n} \right)\\
    &=0 \shorteqnote{(note that $\delta_{i,j} (1-\delta_{i,j}) = 0, \forall i,j$)}.
\end{align*}

\subsection{When $\mH=\alpha \mI$ and $W_{i,j}\sim \mathcal{N}(0,\sigma), \forall \sigma $, the gradient becomes zero on expectation}
We can write $\mH =\alpha \mI$ as $H_{i,j} = \alpha\delta_{i,j}$.  Therefore, the gradient elements are reduced to:
\begin{align*}
    \mathbb{E}_{\mW_{l}}[\nabla_{\mW} J]_{i,j} &= \mathbb{E}_{\mW_{l}}\left[4\sum_{n=1}^k A_{j,n} W_{i,n} H_{i,i} (1-\delta_{j,n}) +  2\sum_{m=1}^k \sum_{n=1}^k A_{m,n} R_{i,j}^{m,n}(1-\delta_{m,n})\right]\\
    &= \mathbb{E}_{\mW_{l}}\left[4\alpha\sum_{n=1}^k A_{j,n} W_{i,n} \delta_{i,i} (1-\delta_{j,n}) +  2\sum_{m=1}^k \sum_{n=1}^k A_{m,n} R_{i,j}^{m,n}(1-\delta_{m,n}) \right] \\
    &= \mathbb{E}_{\mW_{l}}\Bigg[4\alpha^2\sum_{n=1}^k \Bigg(\sum_{l=1}^k W_{l,j}W_{l,n}\delta_{l,l}\Bigg) W_{i,n}  \delta_{i,i} (1-\delta_{j,n}) \\& \quad\quad\quad\quad\quad + 2\alpha\sum_{m=1}^k \sum_{n=1}^k \left(\sum_{l=1}^k W_{l,m}W_{l,n}\delta_{l,l}\right) R_{i,j}^{m,n}\left(1 - \delta_{m,n} \right)\Bigg]\\
    &= \mathbb{E}_{\mW_{l}}\left[4\alpha^2\sum_{n=1}^k \left(\sum_{l=1}^k W_{l,j}W_{l,n}\right) W_{i,n}(1-\delta_{j,n}) + 2\alpha\sum_{m=1}^k \sum_{n=1}^k \left(\sum_{l=1}^k W_{l,m}W_{l,n}\right) R_{i,j}^{m,n}\left(1 - \delta_{m,n} \right)\right]\\
    &= \mathbb{E}_{\mW_{l}}\left[4\alpha^2 W_{i,j}^3 - 4\alpha^2 W_{i,j}^3 + 2\alpha\sum_{m=1}^k \left(\sum_{l=1}^k W_{l,m}W_{l,m}\right) R_{i,j}^{m,m} - 2\alpha\sum_{n=1}^k \left(\sum_{l=1}^k W_{l,m}W_{l,m}\right) R_{i,j}^{m,m} \right]\\
    &= 0.
\end{align*}
Note that $\E[{W_{i,j}^2 W_{i,l}}]=0, \forall l\neq j$.


\section{HesScale with Convolutional Neural Networks}
\label{appendix:cnn-derivation}
Here, we derive the Hessian propagation for convolutional neural networks (CNNs). Consider a CNN with $L-1$ layers followed by a fully connected layer that outputs the predicted output $\vq$. The CNN filters are parameterized by $\{\mW_1,...,\mW_L\}$, where $\mW_l$ is the filter matrix at the $l$-th layer with the dimensions $k_{l,1}\times k_{l,2}$, and its element at the $i$th row and the $j$th column is denoted by $W_{l,i,j}$. For the simplicity of this proof, we assume that the number of filters at each layer is one; the proof can be extended easily to the general case. At the layer $l$, we get the activation matrix $\mH_{l}$ by applying the activation function $\boldsymbol\sigma$ to the pre-activation $\mA_{l}$: $\mH_{l} = \boldsymbol\sigma(\mA_{l})$. We assume here that the activation function is element-wise activation for all layers except for the final layer $L$, where it becomes the softmax function. We simplify notations by defining $\mH_0 \doteq \mX$, where $\mX$ is the input sample. The activation $\mH_{l}$ is then convoluted by the weight matrix $\mW_{l+1}$ of layer $l+1$ to produce the next pre-activation: ${A}_{l+1,i,j} = \sum_{m=0}^{k_{l,1} - 1}\sum_{n=0}^{k_{l,2} - 1} {{W}_{l+1,m,n}}{H}_{l,(i+m),(j+n)}$. We denote the size of the activation at the $l$-th layer by $h_{l}\times w_{l}$. The recursive formulation used for Hessian diagonals backpropagation is given in \cref{thm:hesscale-cnn}.

\begin{theorem}
\label{thm:hesscale-cnn}
\textbf{HesScale Computation with CNNs}. Under the zero second-order off-diagonals assumption in all layers of a neural network except for the last one, the second derivatives can be propagated with linear complexity in the number of network parameters and in the network's output dimension using the following equations:

\begin{align*}
\widehat{\frac{\partial^2 \mathcal{L}}{\partial {A^2_{l,i,j}}}} &\doteq  \sigma^\prime(A_{l, i,j})^2 \sum_{m=0}^{k_{l+1,2} - 1} \sum_{n=0}^{k_{l+1,2} - 1} \widehat{\frac{\partial^2 \mathcal{L}}{\partial A^2_{l+1, (i-m),(j-n)}}} W^2_{l+1, m,n} \nonumber \\
& +  \sigma^{\prime\prime}(A_{l, i,j})\sum_{m=0}^{k_{l+1,2} - 1} \sum_{n=0}^{k_{l+1,2} - 1} \frac{\partial \mathcal{L}}{\partial A_{l+1, (i-m),(j-n)}} W_{l+1, m,n},\\
\widehat{\frac{\partial^2 \mathcal{L}}{\partial {W^2_{l,i, j}}}} &\doteq \sum_{m=0}^{h_{l} - k_{l,1}}\sum_{n=0}^{w_{l} - k_{l,2}} \widehat{\frac{\partial^2 \mathcal{L}}{\partial A^2_{l, m,n}}} H^2_{l-1, (i+m),(j+n)}.
\end{align*}
\end{theorem}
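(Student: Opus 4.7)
The plan is to mirror the fully connected derivation of Section~\ref{method-section} while accounting for weight sharing and spatial structure. I would first write out the CNN forward pass $A_{l+1,i,j} = \sum_{m,n} W_{l+1,m,n} H_{l,(i+m),(j+n)}$ with $H_l=\sigma(A_l)$, and derive the first-order chain rules analogous to Eqs.~\ref{equ:gradient-W} and \ref{equ:gradient-a}. The weight gradient follows directly from the product rule: $\partial\mathcal{L}/\partial W_{l,i,j} = \sum_{m,n} (\partial\mathcal{L}/\partial A_{l,m,n})\, H_{l-1,(i+m),(j+n)}$. For the pre-activation gradient, I would invert the convolution via $\partial A_{l+1,p,q}/\partial H_{l,i,j} = W_{l+1,i-p,j-q}$ (restricted to valid indices) and reindex $m=i-p$, $n=j-q$ to obtain $\partial\mathcal{L}/\partial A_{l,i,j} = \sigma'(A_{l,i,j})\sum_{m,n}(\partial\mathcal{L}/\partial A_{l+1,(i-m),(j-n)})\, W_{l+1,m,n}$, exactly the convolutional form anticipated by the theorem.

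Next I would differentiate each of these once more and keep cross terms explicit. Because weight sharing makes $W_{l,i,j}$ feed many pre-activations, the exact $\partial^2\mathcal{L}/\partial W_{l,i,j}^2$ expands into a double sum over $(m,n)$ and $(m',n')$ whose summand is $\partial^2\mathcal{L}/(\partial A_{l,m,n}\partial A_{l,m',n'})\cdot H_{l-1,(i+m),(j+n)} H_{l-1,(i+m'),(j+n')}$. Similarly, differentiating the pre-activation gradient yields a $\sigma''(A_{l,i,j})$ term multiplied by the first-order sum above, plus a $\sigma'(A_{l,i,j})^2$ term carrying a double sum of off-diagonal Hessian entries of the next layer, each weighted by two filter elements $W_{l+1,m,n}\, W_{l+1,i-p',j-q'}$.

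The HesScale step then amounts to discarding off-diagonal Hessian entries of the pre-activations, mirroring Eqs.~\ref{equ:hessian-approx-W} and \ref{equ:hessian-approx-a}. Forcing $(m',n')=(m,n)$ collapses the weight double sum to $\sum_{m,n}\widehat{\partial^2\mathcal{L}/\partial A_{l,m,n}^2}\, H_{l-1,(i+m),(j+n)}^2$, matching the theorem. For the pre-activation recursion, keeping only the diagonal imposes $(p',q')=(i-m,j-n)$, so $W_{l+1,i-p',j-q'}=W_{l+1,m,n}$ and the double sum collapses to $\sum_{m,n}\widehat{\partial^2\mathcal{L}/\partial A_{l+1,(i-m),(j-n)}^2}\, W_{l+1,m,n}^2$, again matching the statement; the final (fully connected) layer is handled with exact diagonals exactly as in the main text. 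A brief counting argument confirms linear complexity: each stated equation sums only over the filter support per output coordinate, so one propagation has the cost of a standard convolution.

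The main obstacle I anticipate is bookkeeping: correctly inverting the convolution in the chain rule, keeping nonnegative filter indices and valid spatial positions consistent at both input and output sizes, and making transparent why the single diagonality assumption on $\nabla^2_{\mA_{l+1}}\mathcal{L}$ simultaneously eliminates the weight-sharing cross terms (for $\partial^2\mathcal{L}/\partial W_{l,i,j}^2$) and the filter-pair cross terms (for $\partial^2\mathcal{L}/\partial A_{l,i,j}^2$). A secondary, purely notational difficulty is the single-filter, stride-one restriction in the statement; extending to multiple filters and general strides introduces only further indices and can be relegated to a remark.
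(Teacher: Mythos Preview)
Your proposal is correct and follows essentially the same route as the paper: derive the convolutional first-order backpropagation formulas, differentiate once more to expose the double sums over off-diagonal pre-activation Hessian entries, and then invoke the diagonality assumption to collapse them into the stated recursions. The paper's proof does not spell out the index-inversion or the linear-complexity counting as explicitly as you propose, so your write-up would in fact be slightly more detailed, but the argument is the same.
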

\begin{proof}
The backpropagation equations for the described network are given following \citet{rumelhart1986learning}:
\begin{align}
\frac{\partial \mathcal{L}}{\partial A_{l,i,j}} &=  \sum_{m=0}^{k_{l+1,1} - 1}\sum_{n=0}^{k_{l+1,2} - 1} \frac{\partial \mathcal{L}}{\partial A_{l+1, (i-m),(j-n)}} \frac{\partial A_{l+1, (i-m),(j-n)}}{\partial A_{l, i,j}} \nonumber\\
&= \sum_{m=0}^{k_{l+1,1} - 1}\sum_{n=0}^{k_{l+1,2} - 1} \frac{\partial \mathcal{L}}{\partial A_{l+1, (i-m),(j-n)}} \sum_{m'=0}^{k_{l+1,1} - 1}\sum_{n'=0}^{k_{l+1,2} - 1} W_{l+1, m',n'} \frac{\partial H_{l, (i-m+m'), (j-n+n')}}{\partial A_{l, i, j}}  \nonumber \\
&= \sum_{m=0}^{k_{l+1,1} - 1}\sum_{n=0}^{k_{l+1,2} - 1} \frac{\partial \mathcal{L}}{\partial A_{l+1, (i-m),(j-n)}} W_{l+1, m,n} \sigma^\prime(A_{l, i,j}) \nonumber \\
&= \sigma^\prime(A_{l, i,j}) \sum_{m=0}^{k_{l+1,1} - 1}\sum_{n=0}^{k_{l+1,2} - 1} \frac{\partial \mathcal{L}}{\partial A_{l+1, (i-m),(j-n)}} W_{l+1, m,n},
\end{align}
\begin{align}
\frac{\partial \mathcal{L}}{\partial W_{l,i,j}} &= \sum_{m=0}^{h_{l} - k_{l,1}}\sum_{n=0}^{w_{l} - k_{l,2}} \frac{\partial \mathcal{L}}{\partial A_{l, m,n}} \frac{\partial A_{l, m,n}}{\partial W_{l, i,j}} =  \sum_{m=0}^{h_{l} - k_{l,1}}\sum_{n=0}^{w_{l} - k_{l,2}} \frac{\partial \mathcal{L}}{\partial A_{l, m,n}} H_{l-1, (i+m),(j+n)}.
\end{align}
In the following, we write the equations for the exact Hessian diagonals with respect to weights  $\sfrac{\partial^2 \mathcal{L}}{\partial {W^2_{l,i, j}}}$, which requires the calculation of $\sfrac{\partial^2 \mathcal{L}}{\partial {A^2_{l,i,j}}}$ first:

\begin{align}
\frac{\partial^2 \mathcal{L}}{\partial {A^2_{l,i,j}}}
&= \frac{\partial}{\partial A_{l,i,j}} \Bigg[ \sigma^\prime(A_{l, i,j}) \sum_{m=0}^{k_{l+1,1} - 1}\sum_{n=0}^{k_{l+1,2} - 1} \frac{\partial \mathcal{L}}{\partial A_{l+1, (i-m),(j-n)}} W_{l+1, m,n}   \Bigg] \nonumber \\
&= \sigma^\prime(A_{l, i,j}) \sum_{m,p=0}^{k_{l+1,2} - 1} \sum_{n,q=0}^{k_{l+1,2} - 1} \frac{\partial^2 \mathcal{L}}{\partial A_{l+1, (i-m),(j-n)}\partial A_{l+1, (i-p),(j-q)}} \frac{\partial A_{l+1, (i-p),(j-q)}}{\partial A_{l,i,j}}  W_{l+1, m,n}   \nonumber \\
&\quad +\sigma^{\prime\prime}(A_{l, i,j}) \sum_{m=0}^{k_{l+1,2} - 1} \sum_{n=0}^{k_{l+1,2} - 1} \frac{\partial \mathcal{L}}{\partial A_{l+1, (i-m),(j-n)}} W_{l+1, m,n}  \nonumber
\end{align}
\begin{align}
\frac{\partial^2 \mathcal{L}}{\partial {W^2_{l,i,j}}}
&= \frac{\partial}{\partial W_{l,i,j}} \Bigg[\sum_{m=0}^{h_{l} - k_{l,1}}\sum_{n=0}^{w_{l} - k_{l,2}} \frac{\partial \mathcal{L}}{\partial A_{l, m,n}} H_{l-1, (i+m),(j+n)} \Bigg] \nonumber \\
&= \sum_{m,p=0}^{h_{l} - k_{l,1}}\sum_{n,q=0}^{w_{l} - k_{l,2}} \frac{\partial^2 \mathcal{L}}{\partial A_{l, m,n}\partial A_{l, p,q}}\frac{\partial A_{l, p,q}}{\partial W_{l, i,j}} H_{l-1, (i+m),(j+n)} \nonumber
\end{align}

Since the calculation of $\sfrac{\partial^2 \mathcal{L}}{\partial {A^2_{l,i,j}}}$ and $\sfrac{\partial^2 \mathcal{L}}{\partial {W^2_{l,i,j}}}$ depend on the off-diagonal terms, the computation complexity becomes quadratic. 
Following \citet{becker1988improving}, we approximate the Hessian diagonals by ignoring the off-diagonal terms, which leads to a backpropagation rule with linear computational complexity for our estimates $\widehat{\frac{\partial^2 \mathcal{L}}{\partial {W^2_{l,i, j}}}}$ and $\widehat{\frac{\partial^2 \mathcal{L}}{\partial {A^2_{l,i,j}}}}$:
\begin{align}
\widehat{\frac{\partial^2 \mathcal{L}}{\partial {A^2_{l,i,j}}}} &\doteq  \sigma^\prime(A_{l, i,j})^2 \sum_{m=0}^{k_{l+1,2} - 1} \sum_{n=0}^{k_{l+1,2} - 1} \widehat{\frac{\partial^2 \mathcal{L}}{\partial A^2_{l+1, (i-m),(j-n)}}} W^2_{l+1, m,n} \nonumber \\
&\quad +\sigma^{\prime\prime}(A_{l, i,j})\sum_{m=0}^{k_{l+1,2} - 1} \sum_{n=0}^{k_{l+1,2} - 1} \frac{\partial \mathcal{L}}{\partial A_{l+1, (i-m),(j-n)}} W_{l+1, m,n} \label{equ:hessian-approx-a-conv},\\
\widehat{\frac{\partial^2 \mathcal{L}}{\partial {W^2_{l,i, j}}}} &\doteq \sum_{m=0}^{h_{l} - k_{l,1}}\sum_{n=0}^{w_{l} - k_{l,2}} \widehat{\frac{\partial^2 \mathcal{L}}{\partial A^2_{l, m,n}}} H^2_{l-1, (i+m),(j+n)}. \label{equ:hessian-approx-W-conv}
\end{align}
\end{proof}


\clearpage

\section{Additional Supervised Classification Results}
\label{appendix:additional-classification}
Here, we provide the sensitivity of the step size for each method in Fig.\ \ref{fig:sensitivity_epochs_cifar100}.

\begin{figure}[ht]
\centering
\subfigure[CIFAR-100 All-CNN]{
     \includegraphics[width=0.3\textwidth]{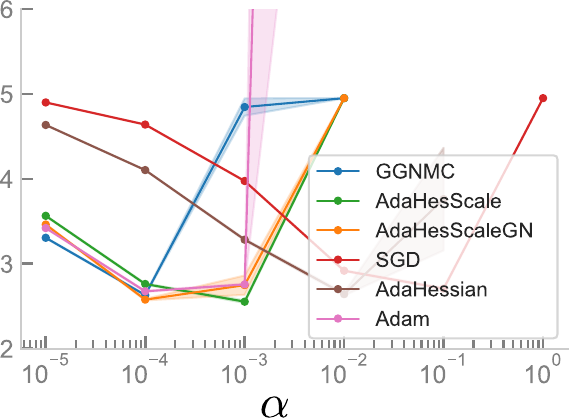}
     \label{fig:cifar100-all-cnn-epochs-sens}
}
\subfigure[CIFAR-100 3C3D]{
     \includegraphics[width=0.3\textwidth]{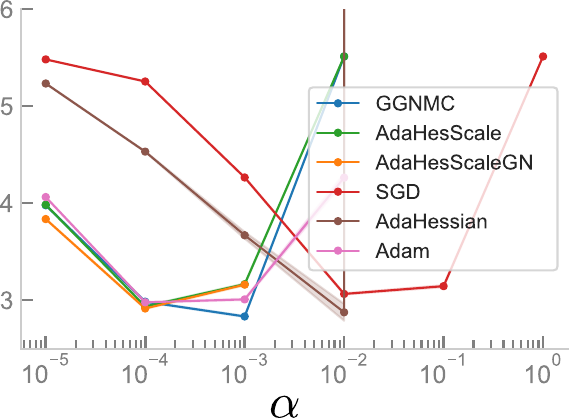}
     \label{fig:CIFAR-100-3c3d-epochs-sens}
}
\caption{Parameter Sensitivity study for each algorithm on CIFAR-100 with All-CNN and 3C3D architectures. The range of step size is $\{10^{-5}, 10^{-4}, 10^{-3}, 10^{-2}, 10^{-1} , 10^{0} \}$. We choose $\beta_1$ to be equal to $0.9$ and $\beta_2$ to be equal to $0.999$. Each point for each algorithm represents the average test loss given a set of parameters.}
\label{fig:sensitivity_epochs_cifar100}
\end{figure}


\section{Additional RL experiments}
\label{appendix:rl-additional}
Here, we provide additional results on the performance of the A2C and PPO algorithms in Fig.\ \ref{fig:rl-ppo-a2c-2}. In addition, we provide a robustness analysis of the step size with the A2C and PPO algorithms using Scaled AdaHesScale against Scaled Adam in Fig.\ \ref{fig:rl-robustness-2}, \ref{fig:rl-robustness-3}, and \ref{fig:rl-robustness-4}.

\begin{figure}[ht]
\centering
\includegraphics[width=\textwidth]{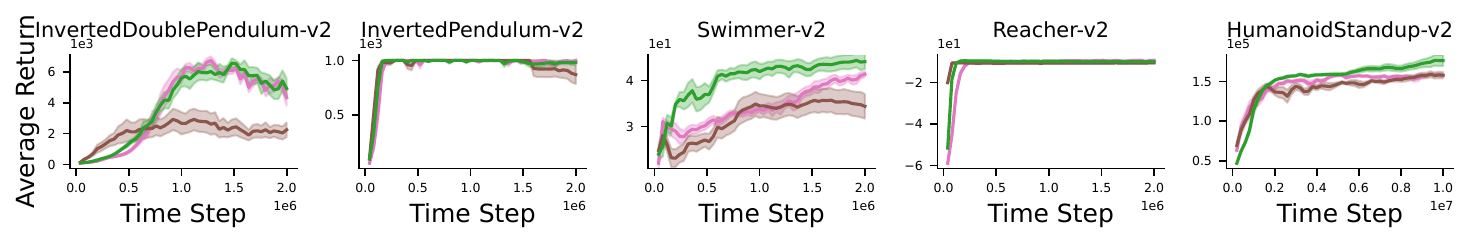}
\includegraphics[width=\textwidth]{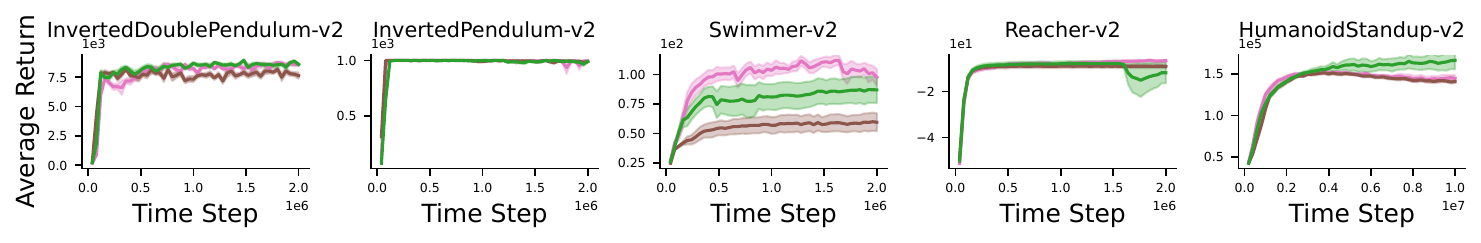}
\includegraphics[width=0.4\textwidth]{figures/rl/legends-rl.pdf}
\caption{Performance of A2C (first row) and PPO (second row) with AdaHesScale, Adam, and with AdaHessian on $5$ MuJoCo environments. We show the undiscounted return averaged over $10$ independent runs. The shaded area represents the standard error.}
\label{fig:rl-ppo-a2c-2}
\end{figure}

\begin{figure}[ht]
\centering
\includegraphics[width=0.65\textwidth]{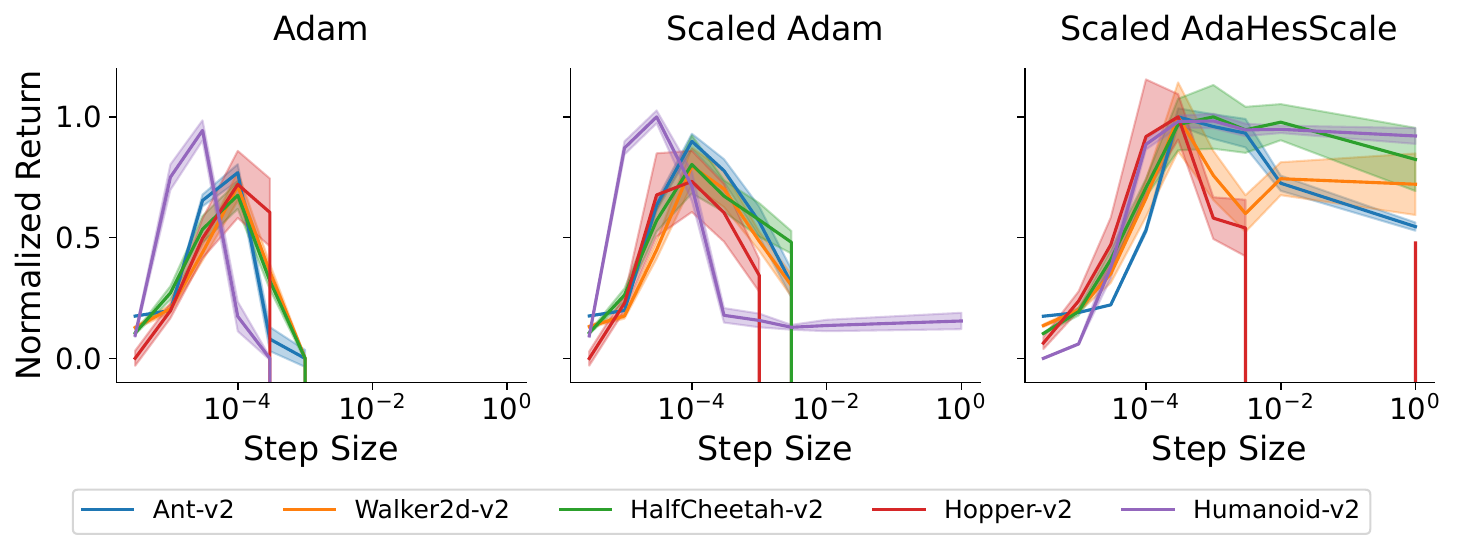}
\vspace{-0.5cm}
\caption{Robustness of HesScale-based step-size scaling with AdaHesScale and Adam on $5$ MuJoCo environments using A2C. We show the undiscounted return averaged over $10$ independent runs. The shaded area represents the standard error.}
\label{fig:rl-robustness-2}
\end{figure}

\begin{figure}[ht]
\centering
\includegraphics[width=0.8\textwidth]{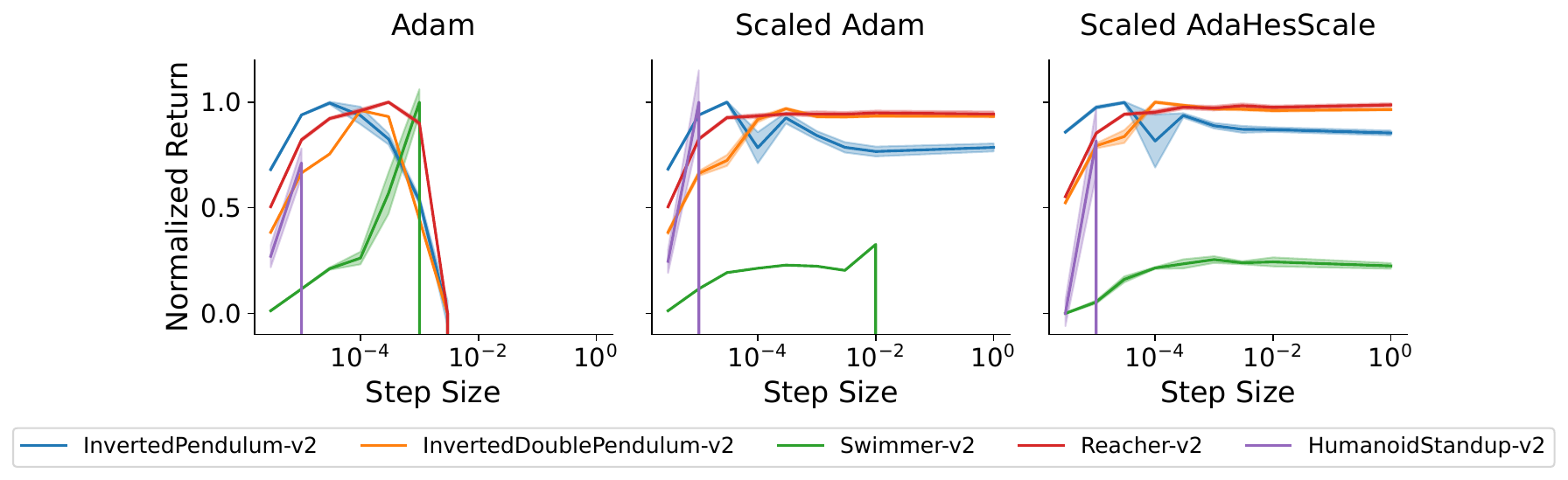}
\vspace{-0.5cm}
\caption{Robustness of HesScale-based step-size scaling with AdaHesScale and Adam on $5$ additional MuJoCo environments using PPO. We show the undiscounted return averaged over $10$ independent runs. The shaded area represents the standard error.}
\label{fig:rl-robustness-3}
\end{figure}

\begin{figure}[ht]
\centering
\includegraphics[width=0.8\textwidth]{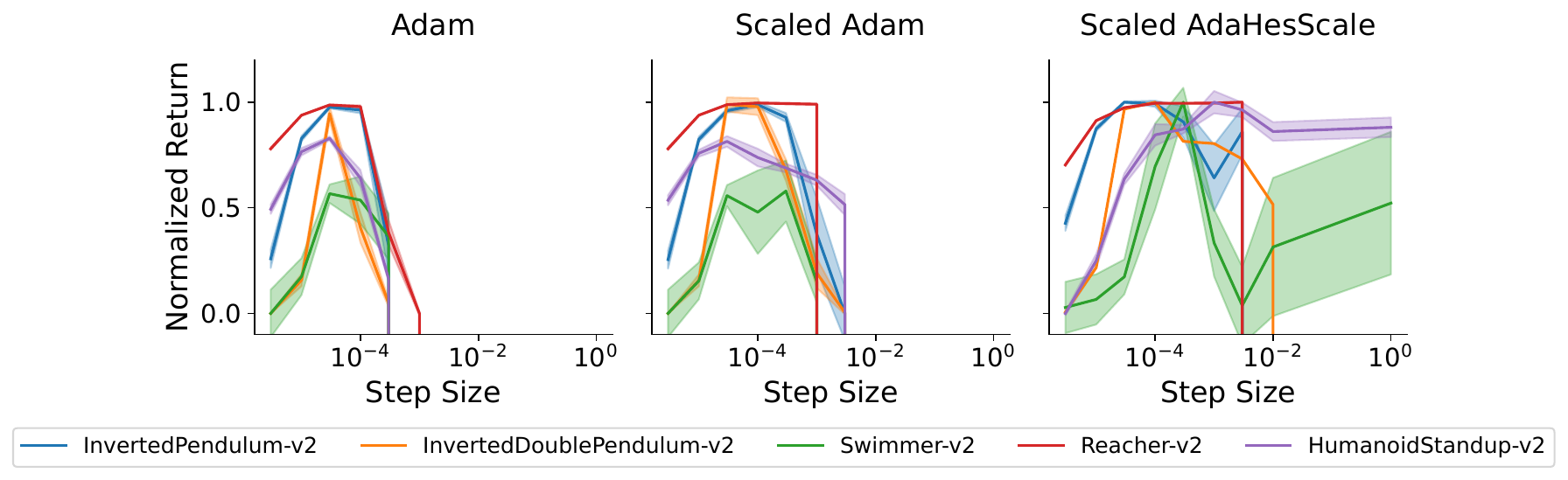}
\vspace{-0.5cm}
\caption{Robustness of HesScale-based step-size scaling with AdaHesScale and Adam on $5$ additional MuJoCo environments using A2C. We show the undiscounted return averaged over $10$ independent runs. The shaded area represents the standard error.}
\label{fig:rl-robustness-4}
\end{figure}

\end{document}